%% file: main.tex
\documentclass{article}

\PassOptionsToPackage{numbers, compress}{natbib}

\usepackage[preprint]{neurips_2026}

\usepackage[utf8]{inputenc} 
\usepackage[T1]{fontenc}    
\usepackage{hyperref}       
\usepackage{url}            
\usepackage{booktabs}       
\usepackage{amsfonts}       
\usepackage{nicefrac}       
\usepackage{microtype}      
\usepackage{xcolor}         
\usepackage{amsmath}
\usepackage{comment}
\usepackage{multirow}
\usepackage{colortbl}
\usepackage{pdfpages}
\usepackage{bm}
\usepackage{cleveref}
\usepackage{subcaption}
\usepackage{float}
\usepackage{nicematrix}       
\usepackage{tikz}

\crefname{figure}{Fig.}{Figs.}
\crefname{table}{Tab.}{Tabs.}
\crefname{equation}{Eq.}{Eqs.}
\crefname{section}{Sec.}{Secs.}
\crefname{subsection}{Sec.}{Secs.}
\crefname{subsubsection}{Sec.}{Secs.}
\crefname{appendix}{App.}{Apps.}
\crefname{subappendix}{App.}{Apps.}
\crefname{subsubappendix}{App.}{Apps.}
\crefname{align}{Eq.}{Eqs.}
\title{DPPE: Rethinking Camera-Based Positional Encoding for Scaling Multi-View Transformers}

\author{%
  Shun Kenney\thanks{Work was done during an internship at SB Intuitions.} \\
  Keio University\\
  \texttt{shunkenney@keio.jp} \\
  \And
  Teppei Suzuki \\
  SB Intuitions \\
  \texttt{teppei.suzuki@sbintuitions.co.jp} \\
}

\begin{document}

\maketitle

\input{sections/00_abstract}

\input{sections/01_intro}

\input{sections/02_preliminaries}

\input{sections/03_analysis_prope}
\input{sections/04_method}
\input{sections/05_experiments}
\input{sections/07_conclusion}

\bibliographystyle{unsrtnat}
\bibliography{references}

\appendix
\crefalias{section}{appendix}
\crefalias{subsection}{subappendix}
\crefalias{subsubsection}{subsubappendix}

\input{sections/08_appendix}

\end{document}

%% file: sections/00_abstract.tex
\begin{abstract}
The remarkable scalability of Transformers has expanded their application to 3D computer vision, where camera-aware positional encoding is crucial for providing spatial cues in multi-view geometry.
Recent advancements have established the practice of using camera parameters---such as extrinsics or projection matrices---as relative positional encoding into the query, key, and value vectors of the attention mechanism.
However, when scaling up the training recipe of novel view synthesis (NVS) models with the camera-based positional encoding, we observe a significant issue: model performance stagnates in the late stages of training.

In this paper, we investigate the cause of the performance bottleneck when scaling up and demonstrate that 
storing rotation and translation given by the positional encoding in the same dimensions of the value vector causes indeterminacy in their independent identification, hindering training scalability.
To address this, we propose Decoupled Pose Positional Encoding (DPPE), a novel camera-based positional encoding that explicitly decouples rotation and translation.
Extensive evaluations on NVS tasks demonstrate that DPPE enables stable long-term training even in scaled-up training setup.
Furthermore, it exhibits superior generalization performance in extrapolation settings, such as handling an increased number of viewpoints and zoom-in scenarios.
\end{abstract}

%% file: sections/01_intro.tex
\section{Introduction}
The remarkable scalability of Transformers~\cite{transformer} has led to their dominance not only in natural language processing but also in numerous other areas, including the 3D computer vision field~\cite{lvsm,prope,vggt,ma,da3}.
The attention function~\cite{transformer}, which governs the information exchange between tokens in the Transformer, is implemented with positional encoding, which explicitly injects positional information into tokens for tasks where sequence order matters, since the attention function is itself permutation-invariant.
There are various positional encoding methods~\cite{transformer,prope,learnablepe,cpe,alibi,rope,ropevit,circlerope,ganpe,llmpe1,llmpe2,llmpe3,spiralrope,drope,cape,gta,rayrope,ucpe}, and in recent years, RoPE~\cite{rope} is widely adopted because of its valuable properties such as the flexibility of sequence length and the capability of equipping the linear self-attention with relative position encoding.

For the tasks depending on multi-view geometry, camera-based positional encodings~\cite{cape,gta,prope,ucpe,urope,rayrope,redirector} are employed to provide spatial cues to tokens.
In contrast to pixel-aligned approaches, which adds Pl\"ucker rays or raymaps to RGB~\cite{lvsm,lgm,gslrm,grm,cuter,cat3d}, several methods~\cite{prope,cape,gta,ucpe,redirector,rayrope,urope} have been proposed that apply camera-based transformations (\textit{e.g.}, camera extrinsics or projection matrices) as relative positional encoding within attention.
Notably, GTA~\cite{gta} aligned the coordinate systems of tokens in the feature space by extending the camera-based geometric transformations to the values alongside the queries and keys.
This geometry-aware mechanism enables GTA to significantly improve both the learning efficiency and overall performance of 3D vision tasks without requiring any additional learnable parameters, and the introduction of positional encoding to the values is widely used in several methods~\cite{prope,ucpe,redirector,rayrope,urope}.

\begin{figure}[t]
  \begin{center}
    \begin{subfigure}{0.55\textwidth}
      \centering
      \includegraphics[width=\textwidth]{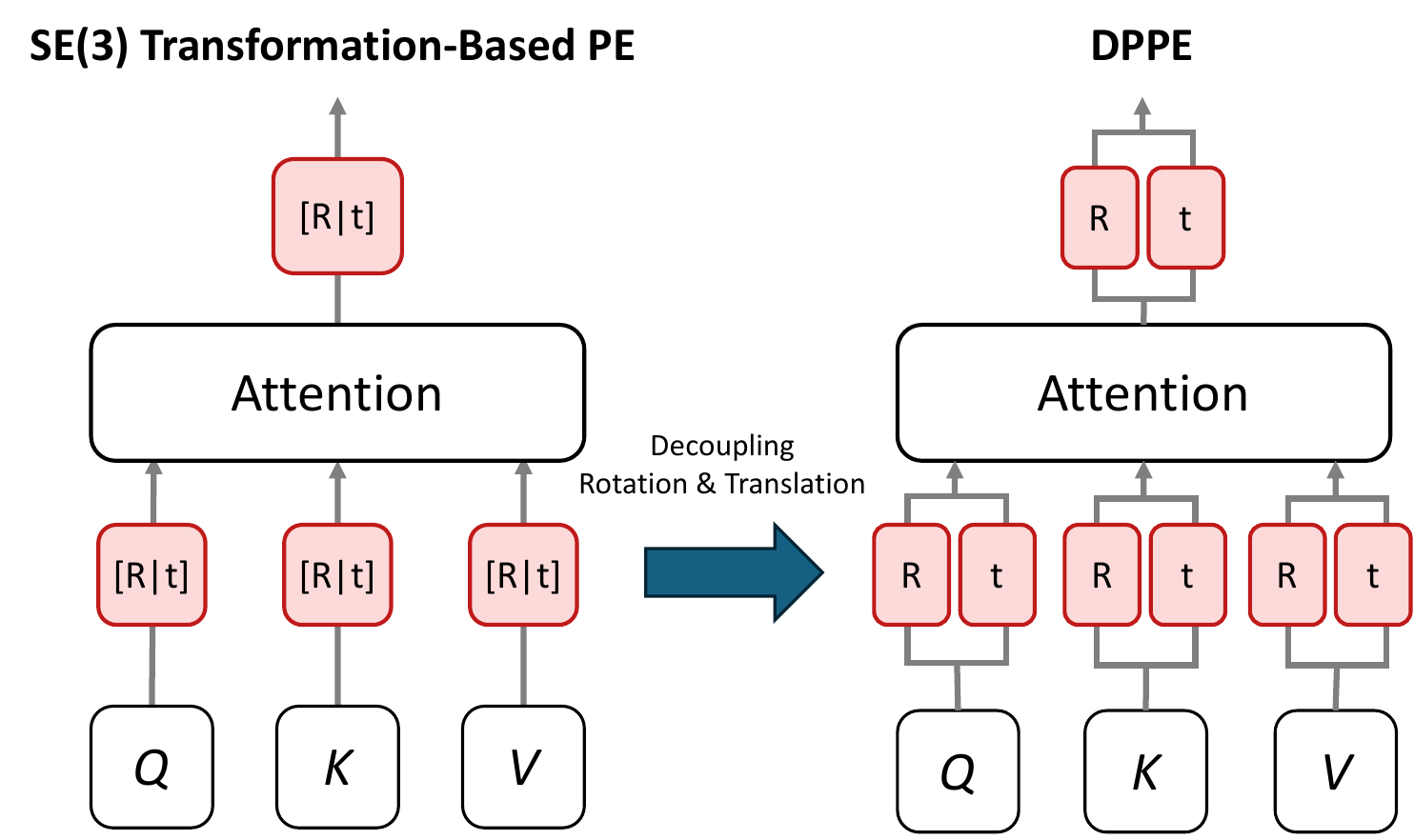}
      \subcaption{\textbf{Methods comparison}}
      \label{fig:pe_comparison_a}
    \end{subfigure}
    \hspace{0.02\textwidth}
    \begin{subfigure}{0.35\textwidth}
      \centering
      \includegraphics[width=\textwidth]{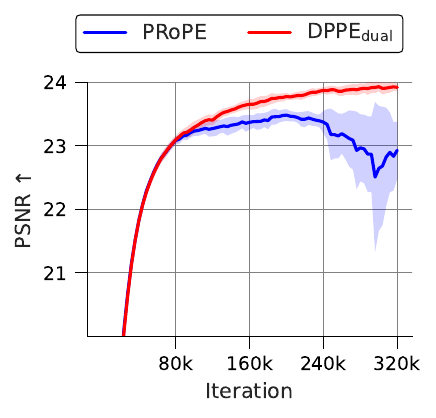}
      \subcaption{\textbf{NVS results on MVImgNet2}}
      \label{fig:pe_comparison_b}
    \end{subfigure}
  \end{center}
  \vspace*{-4pt}
  \caption{\textbf{DPPE prevents the late-stage performance degradation observed in PRoPE.} (a) Conceptual comparison between existing projection matrix–based positional encoding (left) and our proposed DPPE (right). Existing methods apply a single coupled SE(3) transformation $[R\,|\,t]$ uniformly to the queries (Q), keys (K), and values (V) before attention. In contrast, DPPE, our proposed method, decouples the rotation $R$ and translation $t$, and injects them into Q, K, and V according to their distinct geometric roles. (b) Novel view synthesis on MVImgNet2: Results represent the mean over three seeds, and the shaded regions indicate the standard deviation. PRoPE's PSNR stagnates and degrades after 240k iterations, whereas DPPE remains stable.}
    \label{fig:pe_comparison}
\end{figure}

Despite the advantage of camera-based positional encoding and its empirically verified effectiveness, our validation on the novel view synthesis (NVS) task using a scaled-up setting, \textit{i.e.}, a larger model trained on more data for an extended number of iterations, revealed a limitation: the performance of PRoPE, a natural extension of GTA, stagnates in the late stages of training (see \cref{fig:pe_comparison_b}).

To investigate the cause of this performance stagnation, we systematically isolate the distinct factors within PRoPE~\cite{prope}, which is a projection matrix-based positional encoding, and analyze their specific impacts on the NVS task that strictly requires 3D spatial understanding.
From those experiments, we obtain the following insight into the incorporation of camera information into the values in the attention function: \textit{In camera-based positional encoding for the values, performance degrades if the rotation and shift components of the camera are coupled within the same dimensions.}

Based on these findings, we propose Decoupled Pose Positional Encoding (DPPE), a camera-based positional encoding in which rotation and translation are decoupled, as shown in \cref{fig:pe_comparison_a}.
The proposed method enables stable training even in scaled-up configurations and demonstrates better generalization than baseline methods, including robustness to viewpoints beyond the training distribution and to zoom-in scenarios.

%% file: sections/02_preliminaries.tex
\section{Related Work}
\label{sec:related_work}
\paragraph{Positional Encoding in Transformers.}
Absolute positional encoding (APE) directly injects per-token positional information, originating with the sinusoidal embeddings of \cite{transformer} and extended via learned and conditional variants ~\cite{cpe,ganpe}.
Relative positional encoding (RPE) instead encodes pairwise offsets~\cite{learnablepe,llmpe1,llmpe2,llmpe3,alibi}, among which Rotary Position Embedding (RoPE)~\cite{rope} has become predominant due to its flexibility with variable sequence lengths and compatibility with linear attention, and has been adapted to vision and multimodal settings~\cite{ropevit,circlerope,spiralrope,drope}.

\paragraph{Camera-Based Positional Encoding for Multi-View Transformers.}
Extending Transformers to multi-view 3D tasks that require explicit reasoning over camera geometry demands positional encodings capable of conveying such geometry.
Existing methods fall into two broad categories.
Pixel-aligned approaches concatenate Pl\"ucker rays or raymaps to the input RGB, providing per-pixel geometric cues without modifying attention; this strategy can be viewed as an APE in the multi-view geometry setting~\cite{lvsm,lgm,gslrm,grm,cuter,cat3d}.
Attention-based approaches integrate camera transformations directly into attention as RPEs, and can be further classified by where in the attention mechanism the encoding is applied: (i) QK-only methods such as CaPE~\cite{cape} inject camera information only into queries and keys, biasing attention weights without altering value features; (ii) QKV methods including GTA~\cite{gta}, PRoPE~\cite{prope}, and several recent extensions~\cite{ucpe,redirector} additionally transform values and outputs, aligning feature content across coordinate frames.
Concurrent works also explore ray-based and unified relative formulations~\cite{rayrope,urope}.

\paragraph{Scope of This Work.}
GTA~\cite{gta} introduced the principle of extending camera-based positional encoding from the queries and keys to the values and outputs (vo-pe), aligning feature coordinate systems across views.
PRoPE~\cite{prope} generalizes this to projection matrices and applies the same encoding function to both query-key positional encoding (qk-pe) and value-output positional encoding (vo-pe), as do subsequent works in this family.
Despite their functionally distinct roles---qk-pe shaping attention similarity, vo-pe transforming aggregated feature content---this uniform treatment has not been systematically examined, particularly under scaled-up training.
This paper investigates that question, focusing on the GTA/PRoPE family.
We adopt PRoPE~\cite{prope} as the exclusive baseline because (i) PRoPE generalizes GTA (the two coincide under fixed intrinsics, as confirmed in ~\cite{prope}), so findings on PRoPE extend to GTA; (ii) RayRoPE~\cite{rayrope} introduces a learnable depth predictor and addresses ray-based geometry rather than qk-pe/vo-pe uniformity; (iii) methods for generative or video-retake tasks~\cite{ucpe,redirector} operate under fundamentally different paradigms.

\section{Preliminaries}
\label{sec:preliminary}
Here, we review the formulation of PRoPE, our baseline method.
Given a projection matrix $\bm{P}_i \in \mathbb{R}^{4 \times 4}$ for the viewpoint of the image associated with the $i$-th token, the embedding matrix of the projective positional encoding in PRoPE is described as follows:
\begin{align}
    \gamma^{\mathrm{Proj}}_{d/4}(i) = \bm{I}_{d/4} \otimes \bm{P}_i, \label{eq:prope}
\end{align}
where $\otimes$ denote the Kronecker product and $\bm{I}_k \in \{0,1\}^{k \times k}$ be the identity matrix, and $\gamma^{\mathrm{Proj}}_{d/4}(i)\in\mathbb{R}^{d\times d}$ is a block diagonal matrix.
For clarity, the dimension of the identity matrix is indicated in the subscript of $\gamma^{\mathrm{Proj}}(i)$, but we omit it hereafter unless necessary.

Given the $d$ dimensional queries, keys, and values, $\bm{q}, \bm{k}, \bm{v} \in \mathbb{R}^{N \times d}$,
where $N$ is the number of tokens, the self-attention with the projective positional encoding for the $i$-th query is defined as follows:
\begin{align}
    \label{eq:prope_attn}
    \bm{o}'_i &= \gamma^{\mathrm{Proj}}(i) \, \mathrm{SelfAttn}(\bm{q}, \bm{k}, \bm{v}, \gamma^{\mathrm{Proj}}) \notag\\
    &= \gamma^{\mathrm{Proj}}(i) \sum_{j} \alpha_{ij} \, \gamma^{\mathrm{Proj}}(j)^{-1} \bm{v}_j,
\end{align}
where the attention weight $\alpha_{ij}$ is defined as:
\begin{align}
    \label{eq:softmax}
    \alpha_{ij} = \frac{\exp\!\left(s_{ij} / \sqrt{d}\right)}{\sum_{l} \exp\!\left(s_{il} / \sqrt{d}\right)},
    \quad
    s_{ij} = \bm{q}_i^{\top} \gamma^{\mathrm{Proj}}(i) \, \gamma^{\mathrm{Proj}}(j)^{-1} \bm{k}_j.
\end{align}

Since $\gamma^{\mathrm{Proj}}(i)$ applied to the output of the self-attention is independent of the index $j$, it can be moved inside the summation and combined with $\gamma^{\mathrm{Proj}}(j)^{-1}$ applied to $\bm{v}_j$.
This allows them to be computed together in advance as ${\gamma^\mathrm{Proj}}(i){\gamma^\mathrm{Proj}}(j)^{-1}=\bm{I}_{d/4} \otimes\bm{P}_i {\bm{P}_j}^{-1}$, which serves as a coordinate system alignment~\cite{gta,prope}.
This role differs fundamentally from that of the positional encoding for query-key pairs, whose primary purpose is to inject biases reflecting token similarity.

PRoPE is used in conjunction with RoPE~\cite{rope} as $\gamma^{\mathrm{PRoPE}} = \left[\begin{smallmatrix} \gamma^\mathrm{Proj}_{d/8} & 0 \\ 0 & \gamma^\mathrm{RoPE}_{d/4}(x,y) \end{smallmatrix}\right]\in\mathbb{R}^{d\times d}$, where $\gamma^\mathrm{RoPE}_{d/4}(x,y)\in\mathbb{R}^{\frac{d}{2}\times \frac{d}{2}}$ is the rotary embeddings based on the 2D patch coordinates $x$ and $y$, \textit{i.e.}, $\gamma^\mathrm{RoPE}_{d/4}(x,y)=\left[\begin{smallmatrix}
\gamma^{\mathrm{RoPE}}_{d/4}(x) & 0 \\ 0 & \gamma^{\mathrm{RoPE}}_{d/4}(y)
\end{smallmatrix}\right]$, where $\gamma^{\mathrm{RoPE}}_{d/4}(\cdot)$ constructs $\frac{d}{4}\times\frac{d}{4}$ rotary embeddings.
We can define the original formulation of PRoPE by replacing $\gamma^{\mathrm{Proj}}$ in \cref{eq:prope_attn} with $\gamma^{\mathrm{PRoPE}}$.

%% file: sections/03_analysis_prope.tex
\section{Investigating the Bottleneck of Camera-Based Positional Encoding}
\label{sec:explor_prope}
In this section, we analyze the functionality of PRoPE~\cite{prope} using the novel view synthesis (NVS) task.
We decompose the positional information constituting PRoPE into 2D patch coordinates, the camera intrinsics, and camera extrinsics to determine which parameters are responsible for the performance stagnation observed under scaled-up training.

\paragraph{Training Setup}
We evaluate all combinations of the parameters across three datasets: MVImgNet2~\cite{mvimgnet2}, RealEstate10K~\cite{re10k}, and SpatialVidHQ~\cite{spatialvid}, using standard RGB inputs.
To investigate the effect of additional geometric input, we conduct an extended study on MVImgNet2 by concatenating Pl\"ucker raymaps to the input RGB in addition to a simple RGB input.
For the model, we adopt the large-scale training configuration used in PRoPE~\cite{prope}, \textit{i.e.}, a Transformer with 12 layers, a hidden dimension of 768, and an MLP dimension of 3072.
We scale up the number of training iterations to 320k to accommodate the scaled-up dataset.
The number of input views is set to two views to simplify the analysis.
Detailed experimental settings are provided in \cref{app:exp_prope_detail}.

\subsection{What is the Bottleneck in PRoPE?}
In PRoPE, embeddings of 2D patch positions and the projection matrices are applied to both query-key and value-output pairs in self-attention.
As a result, the positional encoding affects the attention pattern and the aggregated output, yet the precise role of each component within each encoding has not been thoroughly examined.
Therefore, to clarify the distinct properties of QK and VO positional encodings (qk-pe and vo-pe) and to assess the individual contributions of 2D patch coordinate and camera information, we evaluate all well-posed combinations of these encodings.

\paragraph{Results}

\begin{table}[t]
  \setlength{\fboxsep}{0pt}
  \caption{\textbf{Quantitative results of various positional encoding combinations on various datasets.} ``$\bm{P}$'' and ``2D'' denote the projective positional encoding and the 2D patch positional encoding, respectively. For each metric, the best, second-best, and third-best results are highlighted in \colorbox{red!40}{red}, \colorbox{orange!30}{orange}, and \colorbox{yellow!100!black!30}{yellow}, respectively. We report PSNR$\uparrow$/SSIM$\uparrow$/LPIPS$\downarrow$.}
  \label{tab:all_datasets_id_sweep}
  \centering
  \setlength{\tabcolsep}{4pt} 

  \vspace*{-8pt}
  
  \begin{subtable}{\linewidth}
    \captionsetup{aboveskip=0pt}
    \centering
    \caption{MVImgNet2}
    \label{tab:mvimg_id_sweep}
    \begin{tabular}{@{}lcccc@{}}
      \toprule
      \multirow{2}{*}{qk-pe} & \multicolumn{4}{c}{vo-pe} \\
      \cmidrule(l){2-5}
      & None & 2D & $\bm{P}$ & 2D+$\bm{P}$ \\
      \cmidrule(r){1-1} \cmidrule(lr){2-2} \cmidrule(lr){3-3} \cmidrule(lr){4-4} \cmidrule(l){5-5}
      None & - & - & - & 23.10 / 0.699 / 0.211 \\
      2D   & - & - & 23.90 / 0.728 / {0.181} & {23.50} / 0.713 / 0.194 \\
      $\bm{P}$    & - & \colorbox{orange!30}{24.21} / \colorbox{red!40}{0.739} / \colorbox{yellow!100!black!30}{0.174} & - & {23.98} / 0.731 / 0.183 \\
      2D+$\bm{P}$ & \colorbox{yellow!100!black!30}{24.09} / \colorbox{yellow!100!black!30}{0.733} / \colorbox{orange!30}{0.172} & \colorbox{red!40}{24.22} / \colorbox{orange!30}{0.738} / \colorbox{red!40}{0.171} & 23.81 / 0.725 / {0.184} & {23.73} / {0.725} / {0.183} \\
      \bottomrule
    \end{tabular}
  \end{subtable}
  \begin{subtable}{\linewidth}
    \centering
    \caption{RealEstate10K}
    \label{tab:re10k_id_sweep}
    \begin{tabular}{@{}lcccc@{}}
      \toprule
      \multirow{2}{*}{qk-pe} & \multicolumn{4}{c}{vo-pe} \\
      \cmidrule(l){2-5}
      & None & 2D & $\bm{P}$ & 2D+$\bm{P}$ \\
      \cmidrule(r){1-1} \cmidrule(lr){2-2} \cmidrule(lr){3-3} \cmidrule(lr){4-4} \cmidrule(l){5-5}
      None & - & - & - & {25.94} / {0.891} / 0.099 \\
      2D   & - & - & {26.01} / {0.892} / 0.099 & \colorbox{orange!30}{26.20} / \colorbox{yellow!100!black!30}{0.894} / \colorbox{yellow!100!black!30}{0.097} \\
      $\bm{P}$    & - & {26.11} / {0.893} / \colorbox{yellow!100!black!30}{0.097} & - & {25.93} / {0.892} / 0.098 \\
      2D+$\bm{P}$ & {26.15} / {0.891} / 0.098 & \colorbox{red!40}{26.24} / \colorbox{red!40}{0.898} / \colorbox{red!40}{0.094} & {26.06} / {0.892} / 0.098 & \colorbox{yellow!100!black!30}{26.18} / \colorbox{orange!30}{0.895} / \colorbox{orange!30}{0.095} \\
      \bottomrule
    \end{tabular}
  \end{subtable}
  \begin{subtable}{\linewidth}
    \centering
    \caption{SpatialVidHQ}
    \label{tab:spavid_id_sweep}
    \begin{tabular}{@{}lcccc@{}}
      \toprule
      \multirow{2}{*}{qk-pe} & \multicolumn{4}{c}{vo-pe} \\
      \cmidrule(l){2-5}
      & None & 2D & $\bm{P}$ & 2D+$\bm{P}$ \\
      \cmidrule(r){1-1} \cmidrule(lr){2-2} \cmidrule(lr){3-3} \cmidrule(lr){4-4} \cmidrule(l){5-5}
      None & - & - & - & \colorbox{yellow!100!black!30}{19.65} / \colorbox{orange!30}{0.649} / \colorbox{yellow!100!black!30}{0.248} \\
      2D   & - & - & {19.62} / {0.646} / 0.249 & \colorbox{red!40}{19.68} / \colorbox{orange!30}{0.649} / \colorbox{red!40}{0.246} \\
      $\bm{P}$    & - & {19.58} / {0.647} / 0.251 & - & \colorbox{orange!30}{19.66} / \colorbox{red!40}{0.651} / 0.249 \\
      2D+$\bm{P}$ & {19.58} / {0.645} / \colorbox{yellow!100!black!30}{0.248} & {19.58} / {0.644} / 0.250 & {19.64} / {0.647} / \colorbox{orange!30}{0.247} & \colorbox{yellow!100!black!30}{19.65} / {0.648} / {0.251} \\
      \bottomrule
    \end{tabular}
  \end{subtable}

\end{table}

We show PSNR, SSIM~\cite{ssim}, and VGG~\cite{simonyan2014very}-based LPIPS~\cite{lpips} of each combination on each dataset in \cref{tab:all_datasets_id_sweep}.
For RealEstate10K and SpatialVidHQ, no significant performance differences are observed across the various combinations of positional encodings.
Similarly, on MVImgNet2, the choice of QK positional encoding does not yield substantial differences.
In contrast, for the value-output positional encoding on MVImgNet2, we observe performance improvements when using the ``2D'' or ``None'' settings.

This result suggests that, on certain datasets, the positional encoding using projection matrix for the value-output pair negatively impacts performance.
A similar trend is also observed in \cref{tab:mvimg_plucker_id_sweep}, which shows the metrics when Pl\"ucker raymaps are concatenated to the RGB input for MVImgNet2.
This explicitly rules out the absence of geometric input as the cause of degradation, indicating that applying projection-matrix-based positional encoding to the value-output pair is itself harmful. 
We investigate the underlying cause in the next section.

\begin{table}[t]
  \caption{\textbf{Quantitative results for various positional encoding combinations with Pl\"ucker raymaps on MVImgNet2.} Notations and metrics follow \cref{tab:all_datasets_id_sweep}.}
  \label{tab:mvimg_plucker_id_sweep}
  \centering
  \setlength{\tabcolsep}{4pt}
  \setlength{\fboxsep}{0pt}
  \begin{tabular}{@{}lcccc@{}}
    \toprule
    \multirow{2}{*}{qk-pe} & \multicolumn{4}{c}{vo-pe} \\
    \cmidrule(l){2-5}
    & None & 2D & $\bm{P}$ & 2D+$\bm{P}$ \\
    \cmidrule(r){1-1} \cmidrule(lr){2-2} \cmidrule(lr){3-3} \cmidrule(lr){4-4} \cmidrule(l){5-5}
    None & {23.78} / {0.713} / {0.190} & {23.95} / {0.720} / {0.183} & {23.83} / {0.715} / {0.194} & {23.99} / {0.726} / {0.187} \\
    \addlinespace
    2D   & {23.50} / {0.700} / {0.190} & {23.83} / {0.707} / {0.189} & 24.47 / 0.739 / \colorbox{yellow!100!black!30}{0.170} & {24.12} / {0.727} / {0.189} \\
    \addlinespace
    $\bm{P}$    & 24.40 / 0.738 / {0.174} & \colorbox{red!40}{24.70} / \colorbox{red!40}{0.749} / \colorbox{yellow!100!black!30}{0.170} & {24.18} / {0.727} / {0.177} & 24.29 / 0.734 / {0.176} \\
    \addlinespace
    2D+$\bm{P}$ & \colorbox{yellow!100!black!30}{24.60} / \colorbox{orange!30}{0.745} / \colorbox{red!40}{0.165} & \colorbox{orange!30}{24.64} / \colorbox{yellow!100!black!30}{0.744} / \colorbox{red!40}{0.165} & {24.07} / {0.727} / {0.182} & {24.28} / {0.733} / {0.180} \\
    \bottomrule
  \end{tabular}
\end{table}

\subsection{Why Does the Projection Matrix in Value-Output Positional Encoding Cause Performance Degradation?}
\label{sec:explor_prope_2}
We explore the cause of the performance degradation on MVImgNet2~\cite{mvimgnet2} by decomposing the projection matrix in vo-pe into its constituent elements: camera intrinsics $\bm{K}$, rotation $\bm{R}$, and translation $\bm{t}$.
Note that for the query-key pair, we maintain the ``2D + $\bm{P}$'' configuration.

\paragraph{Results}

\begin{table}[t]
  \caption{\textbf{Results for various camera parameter combinations on MVImgNet2.} We evaluate the effects of each camera parameter, the intrinsics $\bm{K}$, the rotation $\bm{R}$ and the translation $\bm{t}$ in the value-output positional encoding (vo-pe). All values represent the mean over 3 seeds. For each metric, the best, second-best, and third-best results are highlighted in red, orange, and yellow, respectively. Performance decreases whenever $\bm{R}$ and $\bm{t}$ are applied simultaneously.}
  \label{tab:value_pe_ablation}
  \centering
  \setlength{\tabcolsep}{4pt}
  \setlength{\fboxsep}{0pt}
  \begin{tabular}{@{}l cccccccc@{}}
    \toprule
    vo-pe & $\bm{KRt}$ & $\bm{Rt}$ & $\bm{KR}$ & $\bm{Kt}$ & $\bm{K}$ & $\bm{R}$ & $\bm{t}$ & None \\
    \midrule
    PSNR$\uparrow$    & $23.30$ & $23.63$ & $24.18$ & \colorbox{orange!30}{$24.22$} & \colorbox{yellow!100!black!30}{$24.19$} & $24.17$ & \colorbox{red!40}{$24.23$} & {$24.18$} \\
    SSIM$\uparrow$    & $0.711$ & $0.723$ & \colorbox{yellow!100!black!30}{$0.737$} & \colorbox{orange!30}{$0.738$} & $0.736$ & \colorbox{yellow!100!black!30}{$0.737$} & \colorbox{red!40}{$0.739$} & $0.736$ \\
    LPIPS$\downarrow$ & $0.201$ & $0.189$ & \colorbox{yellow!100!black!30}{$0.173$} & \colorbox{orange!30}{$0.172$} & $0.174$ & {$0.175$} & \colorbox{red!40}{$0.171$} & \colorbox{yellow!100!black!30}{$0.173$} \\
    \bottomrule
  \end{tabular}
\end{table}

As shown in \cref{tab:value_pe_ablation}, each metric drops in the combination of $(\bm{K}, \bm{R}, \bm{t})$, which is projection matrix itself, and $(\bm{R}, \bm{t})$ settings.
The common characteristic of these settings is that rotation $\bm{R}$ and translation $\bm{t}$ are applied simultaneously.

To formulate a hypothesis for why the simultaneous application of $\bm{R}$ and $\bm{t}$ degrades performance, we expand VO positional encoding in \cref{eq:prope_attn}.
The relative transformation between $i$-th and $j$-th tokens is described as follows by extracting only the top four dimensions of the values $\bm{v}_j\in\mathbb{R}^d$:
\begin{align}
    \bm{P}_i \bm{P}_j^{-1} \bm{v}_{j, [1:4]}
    &=
    \begin{pmatrix}
        \bm{K}_i \bm{R}_i \bm{R}_j^T \bm{K}_j^{-1} \bm{v}_{j, [1:3]} -\bm{K}_i \bm{R}_i \bm{R}_j^T \bm{t}_j \bm{v}_{j, 4} + \bm{K}_i \bm{t}_i \bm{v}_{j, 4} \\
        \bm{v}_{j, 4}
    \end{pmatrix},
    \label{eq:primal}
\end{align}
where $\bm{x}_{[i:j]}\in\mathbb{R}^{j-i+1}$ denotes elements from $i$-th to $j$-th of $\bm{x}\in\mathbb{R}^d$ and $\bm{K}_i\in\mathbb{R}^{3\times 3}$, $\bm{R}_i\in\mathbb{R}^{3\times 3}$, and $\bm{t}_i\in\mathbb{R}^{3}$ denote the camera intrinsic matrix, the rotation matrix, and the translation vector for the viewpoint corresponding to the $i$-th token, respectively.
This equation indicates that rotation $\bm{R}_i$ and translation $\bm{t}_i$ are coupled within the same top three dimensions.
For datasets such as RealEstate10K and SpatialVidHQ, where typically only one of $\boldsymbol{R}_i$ or $\boldsymbol{t}_i$ varies (\textit{e.g.}, during a shift or pan), this coupling poses no issue, and no performance degradation is observed.
However, given that performance degradation occurred solely on MVImgNet2, we consider the following hypothesis:
\textit{The coupling of rotation and translation hinders the modeling of scenes where these two components change simultaneously.}
This hypothesis suggests that although vo-pe is originally intended for coordinate-system alignment, in practice it may be implicitly required to embed camera information in an identifiable form.

As formally shown in \cref{sec:iden_vope}, the projection matrix-based positional encoding is insufficient to identify the rotation and translation contributions at the per-token level.
Since identifying camera parameters is essential for multi-view stereo tasks such as NVS, this non-identifiability forces the model to disambiguate camera parameters indirectly by aggregating contextual cues across tokens, complicating the learning process.
Therefore, we propose a positional encoding that ensures per-token identifiability of camera parameters.

%% file: sections/04_method.tex
\section{Method}
\label{sec:method}
Building on the hypothesis from \cref{sec:explor_prope_2}, we propose Decoupled Pose Positional Encoding (DPPE), a camera-based positional encoding that retains all camera information while explicitly decoupling rotation $\bm{R}$ and translation $\bm{t}$.
We introduce two variants: DPPE\textsubscript{tAdd}, which explicitly partitions the feature space into dedicated dimensions for storing $\bm{R}$ and $\bm{t}$, and DPPE\textsubscript{dual}, which uses the dual form of the projection matrix.
As in PRoPE~\cite{prope}, we also use RoPE~\cite{rope} for the 2D patch coordinates; we omit that below for brevity.

\subsection{DPPE\textsubscript{tAdd}: Decoupling via Additive Translation}\label{sec:tAdd}
DPPE\textsubscript{tAdd} explicitly decouples $\bm{R}$ and $\bm{t}$ by dividing the token dimensions into two equal halves and applying rotation- and translation-based encodings to each half, respectively.

The embeddings of rotation-based positional encoding, $\gamma_{\mathrm{DPPE}_{\mathrm{tAdd}, \bm{R}}}$ are defined using a transformation by the intrinsic matrix $\bm{K}_i\in\mathbb{R}^{3\times3}$ and the rotation matrix $\bm{R}_i\in\mathbb{R}^{3\times3}$ for the $i$-th token as follows:
\begin{align}
    \gamma^{\mathrm{DPPE}_{\mathrm{tAdd}, \bm{R}}}_{d/6}(i) = \bm{I}_{d/6} \otimes \bm{K}_i\bm{R}_i.
    \label{eq:dppe_tadd_r}
\end{align}
This embedding is then multiplied by the rotation-block components of queries, keys, values, and outputs as in \cref{eq:prope_attn}.
Importantly, the attention weights for the rotation block are computed solely from the rotation-block similarity, independently of the translation block (see \cref{sec:self-attn_dppe} for the full self-attention formulation).
The embeddings of translation-based positional encoding are defined as follows:
\begin{align}
    \gamma^{\mathrm{DPPE}_{\mathrm{tAdd}, \bm{t}}}_{d/6} = \bm{1}_{d/6} \otimes \bm{t}_i,
    \label{eq:dppe_tadd_t}
\end{align}
where $\bm{t}_i\in\mathbb{R}^3$ denotes the translation vector for the $i$-th token and $\bm{1}_{d/6} \in \mathbb{R}^{d/6}$ is the all-ones column vector.
Then, $\gamma^{\mathrm{DPPE}_{\mathrm{tAdd}, \bm{t}}}$ is simply added to queries, keys, values, and outputs.

The detailed formulation of self-attention with DPPE\textsubscript{tAdd} can be found in \cref{sec:self-attn_dppe}.

\subsection{DPPE\textsubscript{dual}: Decoupling via the Dual Projection Matrix}\label{sec:dual}
We also propose DPPE\textsubscript{dual}, which applies the dual form of the projection matrix $\bm{P}^{-T}_i \in \mathbb{R}^{4 \times 4}$~\cite{hartley2003multiple} in place of $\bm{P}_i$ in ~\cref{eq:prope}: $\gamma^{\mathrm{DPPE_\mathrm{dual}}}(i) = \bm{I}_{d/4} \otimes \bm{P}^{-T}_i$.
To examine its properties, we extract the top four dimensions of the value-output transformation, analogously to \cref{eq:primal}:
\begin{align}
    \bm{P}_i^{-T} \bm{P}_j^{T} \bm{v}_{j,[1:4]}
    & =
    \begin{pmatrix}
        \bm{K}_i^{-T} \bm{R}_i \bm{R}_j^T \bm{K}_j^T \bm{v}_{j,[1:3]}\\
        -\bm{t}_i^T \bm{R}_i \bm{R}_j^T \bm{K}_j^T \bm{v}_{j,[1:3]} + \bm{t}_j^T \bm{K}_j^T \bm{v}_{j,[1:3]} + \bm{v}_{j,4}
    \end{pmatrix}
    \label{eq:dual_v}
\end{align}

Although translation $\bm{t}$ is not entirely decoupled from rotation $\bm{R}$, the first three dimensions of \cref{eq:dual_v} contain no $\bm{t}$ terms, so the influence of $\bm{t}$ is isolated within the fourth dimension and can be uniquely identified there.
We provide a formal proof of this property in \cref{sec:iden_vope}.
Consequently, this design is expected to yield practically the same effect as being fully decoupled, and we verify its effectiveness in the next section.

%% file: sections/05_experiments.tex
\section{Experiments}
\label{sec:exp}
In this section, we verify the effectiveness of DPPE in the NVS task, especially for the scenes where camera rotation and translation change simultaneously.
As we discussed in \cref{sec:related_work}, we compare DPPE with PRoPE~\cite{prope} as our baseline method, and we also compare DPPE with GTA~\cite{prope} in \cref{sec:degration}.
We focus the main evaluation on NVS because it requires precise underlying 3D reconstruction and cannot be solved without accurate camera information, making it particularly well-suited for validating camera-based positional encodings. 
We also report the results on the multi-view depth estimation tasks in \cref{sec:depth_exp}, where DPPE likewise demonstrates equivalent to or better than the baseline.

\paragraph{Training Setup}
For the model, we adopt the large-scale training configuration used in PRoPE~\cite{prope}, a 12-layer Transformer with a hidden dimension of 768 and an MLP dimension of 3072, and train the model for 320k iterations with a variable number of multi-view inputs ranging from 2 to 4.
Detailed experimental settings are provided in \cref{app:exp}.

\subsection{Evaluation of DPPE as the Value-Output Positional Encoding}
\label{sec:val_exp}

In this experiment, we applied DPPE\textsubscript{tAdd} and DPPE\textsubscript{dual} to the value-output pair and evaluate them on MVImgNet2~\cite{mvimgnet2}.
Note that the positional encoding for the query-key pair is the same as in PRoPE.

\begin{table}[t]
  \centering
  \caption{\textbf{Effect of applying DPPE to different attention components on MVImgNet2.} (a)DPPE applied only to the value-output pair. (b) DPPE applied to both the query-key and value-output pairs. All values represent the mean over 3 seeds.}
  \label{tab:pe_ablation}
  \vspace*{-8pt}
  \setlength{\tabcolsep}{4pt} 
  \setlength{\fboxsep}{0pt}
  \begin{subtable}[t]{0.48\textwidth}
    \captionsetup{aboveskip=0pt}
    \centering
    \caption{Value-output only.}
    \label{tab:k_pe_ablation}
    \begin{tabular}{@{}lccc@{}}
      \toprule
      Method & PSNR$\uparrow$ & SSIM$\uparrow$ & LPIPS$\downarrow$ \\
      \midrule
      PRoPE        & {$22.91$} & {$0.696$} & {$0.217$} \\
      DPPE\textsubscript{tAdd}        & $\mathbf{24.02}$ & $\mathbf{0.730}$ & {$0.180$} \\
      DPPE\textsubscript{dual}            & {$23.91$} & {$0.725$} & {$\mathbf{0.178}$} \\
      \bottomrule
    \end{tabular}
  \end{subtable}
  \hfill
  \begin{subtable}[t]{0.48\textwidth}
    \captionsetup{aboveskip=0pt}
    \centering
    \caption{Both query-key and value-output.}
    \label{tab:kv_pe_ablation}
    \begin{tabular}{@{}lccc@{}}
      \toprule
      Method & PSNR$\uparrow$ & SSIM$\uparrow$ & LPIPS$\downarrow$ \\
      \midrule
      PRoPE           & {$22.91$} & {$0.696$} & {$0.217$} \\
      DPPE\textsubscript{tAdd}        & {$23.32$} & {$0.703$} & {$0.201$} \\
      DPPE\textsubscript{dual}            & {$\mathbf{23.92}$} & {$\mathbf{0.726}$} & {$\mathbf{0.178}$} \\
      \bottomrule
    \end{tabular}
  \end{subtable}
\end{table}

\begin{table}[t]
    \centering
    \setlength{\tabcolsep}{4pt}
    \caption{\textbf{Comparison of camera-based positional  encodings across datasets.} We evaluate DPPE and PRoPE on MVImgNet2, RealEstate10K, and SpatialVidHQ. All values represent the mean over 3 seeds. The best scores are shown in \textbf{bold}.}
    \label{tab:main_train}
    \begin{tabular}{l ccc ccc ccc}
        \toprule
        \multirow{2}{*}{Method} & \multicolumn{3}{c}{MVImgNet2} & \multicolumn{3}{c}{RealEstate10K} & \multicolumn{3}{c}{SpatialVidHQ} \\
        \cmidrule(lr){2-4} \cmidrule(lr){5-7} \cmidrule(lr){8-10}
        & PSNR$\uparrow$ & SSIM$\uparrow$ & LPIPS$\downarrow$ & PSNR$\uparrow$ & SSIM$\uparrow$ & LPIPS$\downarrow$ & PSNR$\uparrow$ & SSIM$\uparrow$ & LPIPS$\downarrow$ \\
        \cmidrule(lr){1-1} \cmidrule(lr){2-4} \cmidrule(lr){5-7} \cmidrule(lr){8-10}
        PRoPE~\cite{prope} & $22.91$ & $0.696$ & $0.217$ & $24.14$ & $0.851$ & $0.137$ & $19.21$ & $0.626$ & $0.268$ \\
        DPPE\textsubscript{dual} & $23.92$ & $0.726$ & $\mathbf{0.178}$ & $24.16$ & $0.850$ & $0.138$ & $19.26$ & $0.629$ & $0.266$ \\
        DPPE\textsubscript{tAdd} & $\mathbf{24.02}$ & $\mathbf{0.730}$ & $0.180$ & $\mathbf{24.27}$ & $\mathbf{0.855}$ & $\mathbf{0.133}$ & $\mathbf{19.28}$ & $\mathbf{0.630}$ & $\mathbf{0.265}$ \\
        \bottomrule
    \end{tabular}
\end{table}

\paragraph{Results}
As shown in \cref{tab:k_pe_ablation}, both DPPE variants outperform PRoPE in PSNR, SSIM, and LPIPS, demonstrating that decoupling rotation and translation in the value-output positional encoding leads to clear performance gains.
The fact that DPPE\textsubscript{dual} also improves performance---despite not fully separating the rotation- and translation-storing dimensions---indicates that what matters is making camera parameters identifiable, rather than fully disjoint storage.

We also show the qualitative results in \cref{fig:viz}.
Compared to PRoPE, DPPE successfully synthesizes more detailed textures.
When the target viewpoint's camera information is ambiguous, recovering coarse structure may still be possible, but reproducing fine textures becomes highly challenging.
This observation suggests that DPPE facilitates the model in identifying camera parameters more easily.
A comparison between the two variants further shows that DPPE\textsubscript{tAdd} reproduces slightly finer details than DPPE\textsubscript{dual}, which we attribute to its more direct decoupling of rotation and translation.

\subsection{Evaluation of DPPE as the Query-Key Positional Encoding}
We evaluate DPPE by applying each variant to both query-key and value-output pairs on MVImgNet2.
This allows us to verify whether DPPE works not only as an effect on the values and outputs, but also as a similarity bias in the computation of the attention matrix. 

\paragraph{Results}
As shown in \cref{tab:kv_pe_ablation}, DPPE\textsubscript{tAdd} still outperforms PRoPE, but its performance is lower than when it is applied solely to the value-output pair.
DPPE\textsubscript{dual}, in contrast, maintains its performance under this setting.
This asymmetric behavior suggests that the positional encodings for query-key and value-output pairs serve distinct roles.

Unlike the value-output positional encoding, the positional encodings for queries and keys are expected to bias attention by token similarity; however, additive translation injection in DPPE\textsubscript{tAdd} introduces a term $\boldsymbol{t}_i^{\top}\boldsymbol{t}_j$ in the query-key dot product that depends only on absolute camera positions and grows with the scene scale.
This scene-scale-dependent bias overwhelms the genuine similarity signal in scenes with large translations, undermining the intended role of the query-key positional encoding.
Still, it is noteworthy that DPPE\textsubscript{tAdd} remains superior to PRoPE.
In contrast, since DPPE\textsubscript{dual} uses the dual form of the projection matrix, it is expected to behave similarly to PRoPE, which uses the primal form.
Thus, DPPE\textsubscript{dual} keeps its performance even when it is also applied to queries and keys.
Based on these findings, in the subsequent experiments we configure DPPE\textsubscript{tAdd} to use PRoPE for the query-key positional encodings (and DPPE\textsubscript{tAdd} for the value-output positional encoding), while DPPE\textsubscript{dual} is used for both the query-key and value-output positional encodings.

\begin{figure}[t]
    \centering
    \includegraphics[width=0.95\textwidth]{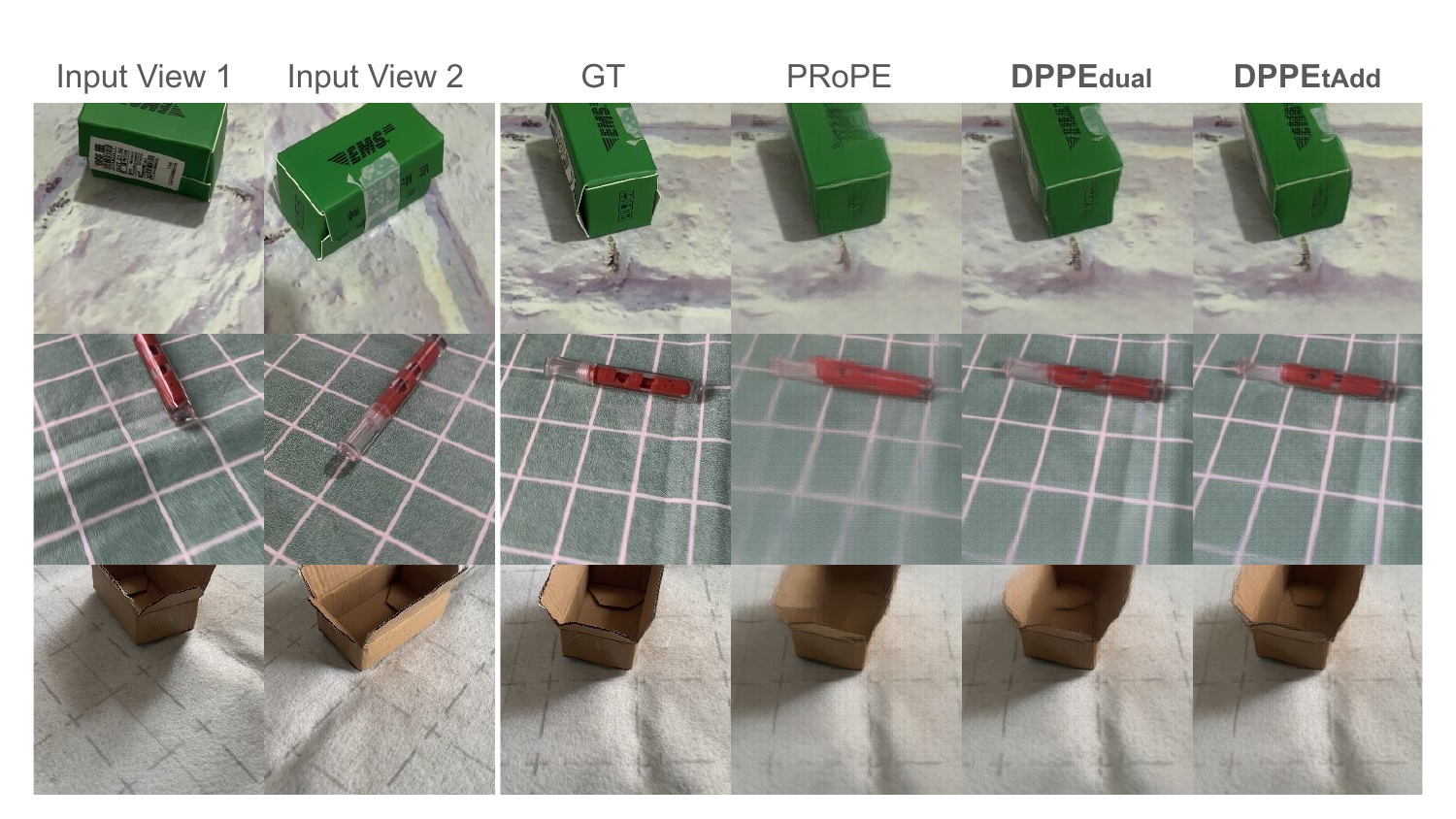
    }
    \caption{\textbf{Qualitative comparison of DPPE and PRoPE on MVImgNet2.} DPPE recovers more detailed textures compared to PRoPE, indicating an improved ability to disambiguate camera parameters. DPPE\textsubscript{tAdd} produces slightly finer details than DPPE\textsubscript{dual}, reflecting its more explicit decoupling of rotation and translation.}
    \label{fig:viz}
\end{figure}

\begin{figure}[t]
    \centering
    \vspace*{-8pt}
    \begin{subfigure}[t]{0.95\textwidth}
        \includegraphics[width=\textwidth]{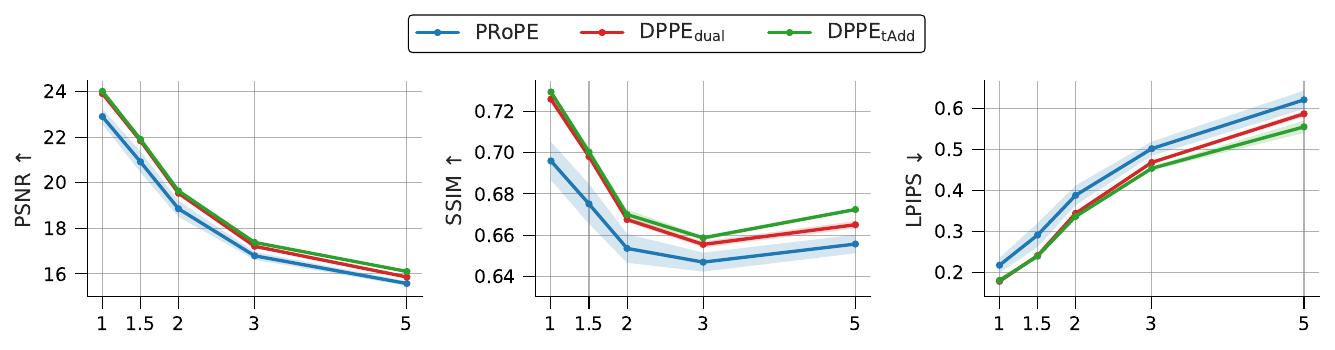}
        \caption{Evaluation results with zooming-in $\times$ [1.0, 5.0].}
        \label{fig:zoom_ood_plot}
    \end{subfigure}
    \begin{subfigure}[t]{0.95\textwidth}
        \includegraphics[width=\textwidth]{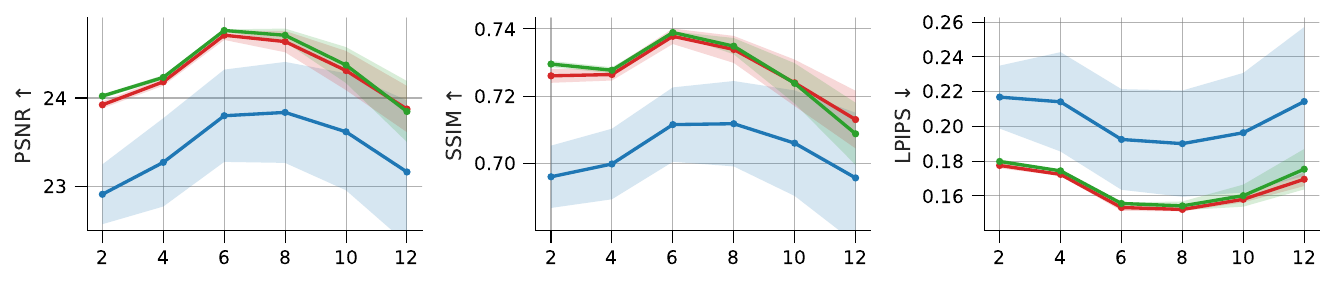}
        \caption{Evaluation results with a variable number of multi-view inputs from 2 to 12.}
        \label{fig:input_view_ood_plot}
    \end{subfigure}
    \caption{\textbf{Evaluation under extrapolation settings.} We evaluate extrapolation capability with respect to the number of viewpoints and the zoom-in factor on MVImgNet2. DPPE constantly outperforms PRoPE in both settings. All values represent the mean over 3 seeds, and the shaded regions indicate the standard deviation.}
    \label{fig:ood_plot}
\end{figure}

\subsection{Generalization and Extrapolation}
In this experiment, we compare DPPE with PRoPE \cite{prope} on MVImgNet2, RealEstate10K, and SpatialVidHQ to evaluate generalization performance across datasets.
We also evaluate the extrapolation performance---that is, the performance under conditions deviating from the training setup---regarding the number of multi-view inputs and increased focal lengths (i.e., zoom-in) on MVImgNet2.

\paragraph{Results}
As shown in \cref{tab:main_train}, DPPE matches or outperforms PRoPE across all three datasets.
On MVImgNet2 in particular, PSNR improves by more than 1.2 dB, confirming effectiveness on data where rotation and translation vary significantly together. 
The marginal gap on SpatialVidHQ is likely due to its predominantly simple camera trajectories, where the benefit of decoupling rotation and translation is inherently limited.

We also show the results for the zoom-in setting in \cref{fig:zoom_ood_plot} and the results for the various number of viewpoints setting in \cref{fig:input_view_ood_plot}.
In both settings, DPPE consistently outperforms PRoPE, indicating strong generalization to unseen test conditions.

%% file: sections/07_conclusion.tex
\section{Limitations}
\label{sec:limitations}
As shown in \cref{tab:main_train}, the proposed method does not yield significant performance improvements over PRoPE on SpatialVidHQ.
We attribute this to the predominantly simple camera trajectories in the dataset, where rotation and translation rarely change simultaneously and thus the rotation–translation coupling in PRoPE's value-output positional encoding does not manifest as a critical bottleneck.
We acknowledge that such simple-trajectory regimes (\textit{e.g.}, pure panning, dolly shots, or near-planar motions) cover a non-trivial portion of practical applications, including indoor walkthroughs, drone footage with stabilized gimbals, and many video generation scenarios.
In these settings, the benefit of decoupling rotation and translation is inherently limited, and DPPE offers no clear advantage over existing methods.
Nevertheless, as the field moves toward larger-scale and more diverse training corpora, scenes with complex coupled motion will become increasingly prevalent, and the limitations of PRoPE identified in this work are likely to emerge as a critical bottleneck.

\section{Conclusion}
\label{sec:conclusion}

In this study, we proposed Decoupled Pose Positional Encoding (DPPE), a camera-based positional encoding method with two variants: DPPE\textsubscript{tAdd} and DPPE\textsubscript{dual}.
Our method is motivated by the finding that storing rotation and translation in the same dimensions causes indeterminacy in their independent identification, hindering training scalability.
DPPE\textsubscript{tAdd} explicitly assigns camera rotation and translation to separate dimensions, while DPPE\textsubscript{dual} uses the dual formulation of the projection matrix.
DPPE\textsubscript{tAdd} achieves slightly better performance than DPPE\textsubscript{dual}, but DPPE\textsubscript{dual} exhibits favorable camera-geometric properties and works effectively as a positional encoding for both the value-output and the query-key pairs.
DPPE enables stable scaled-up training on NVS and also achieves better view synthesis in extrapolation scenarios involving zoom-in and varying numbers of viewpoints, compared to PRoPE.

%% file: sections/08_appendix.tex
\newtheorem{proposition}{Proposition}
\newtheorem{proof}{Proof}

\section{Formulation of Self-Attention with DPPE\textsubscript{tAdd}}\label{sec:self-attn_dppe}

We provide the explicit formulation of the self-attention with DPPE\textsubscript{tAdd}. For clarity, we omit the RoPE component for 2D patch coordinates and focus solely on the rotation- and translation-based positional encodings.

\subsection{Notation and Feature Decomposition}

DPPE\textsubscript{tAdd} partitions the $d$-dimensional token features into two halves of equal dimension $d/2$, dedicated to rotation and translation respectively (we assume $d$ is divisible by $6$). For the $i$-th token, we write
\begin{align}
    \bm{q}_i = \begin{bmatrix} \bm{q}_i^R \\ \bm{q}_i^t \end{bmatrix}, \quad
    \bm{k}_i = \begin{bmatrix} \bm{k}_i^R \\ \bm{k}_i^t \end{bmatrix}, \quad
    \bm{v}_i = \begin{bmatrix} \bm{v}_i^R \\ \bm{v}_i^t \end{bmatrix} \in \mathbb{R}^d,
\end{align}
where the superscripts $R$ and $t$ denote the first $d/2$ dimensions (rotation block) and the last $d/2$ dimensions (translation block), respectively. 
The positional encodings are
\begin{align}
    \gamma^R(i) &:= \gamma^{\mathrm{DPPE}_{\mathrm{tAdd},R}}(i) = I_{d/6} \otimes (K_i R_i) \in \mathbb{R}^{(d/2) \times (d/2)}, \\
    \gamma^t(i) &:= \gamma^{\mathrm{DPPE}_{\mathrm{tAdd},t}}(i) = \bm{1}_{d/6} \otimes \bm{t}_i \in \mathbb{R}^{d/2},
\end{align}
where $\bm{1}_{d/6} \in \mathbb{R}^{d/6}$ is the all-ones column vector.

\subsection{Block-wise Attention Computation}

Since $\gamma^R$ and $\gamma^t$ act on disjoint dimension blocks, DPPE\textsubscript{tAdd} computes the attention score as a \emph{concatenation} of two block-wise scores rather than as their sum:
\begin{align}
    \bm{s}_{ij} \;=\; \begin{bmatrix} s_{ij}^R \\ s_{ij}^t \end{bmatrix} \in \mathbb{R}^{2},
\end{align}
\begin{align}
    s_{ij}^R &\;=\; (\bm{q}_i^R)^{\top} \gamma^R(i) \, \gamma^R(j)^{-1} \bm{k}_j^R \,\big/\, \sqrt{d/2},\\
    s_{ij}^t &\;=\; \bigl( \bm{q}_i^t + \gamma^t(i) \bigr)^{\top} \bigl( \bm{k}_j^t + \gamma^t(j) \bigr)\,\big/\, \sqrt{d/2}.
\end{align}

\paragraph{Block-wise softmax.}
The attention weights for the two blocks are computed by independent softmax normalization over the source token index, so that each block has its own attention pattern:
\begin{align}
    \alpha^R_{ij} &\;=\; \frac{\exp\!\left(s_{ij}^R\right)}
    {\sum_{j'} \exp\!\left(s_{ij'}^R\right)}, &
    \alpha^t_{ij} &\;=\; \frac{\exp\!\left(s_{ij}^t\right)}
    {\sum_{j'} \exp\!\left(s_{ij'}^t\right)}.
\end{align}

\paragraph{Block-wise output.}
The output $\bm{o}'_i = [\,(\bm{o}'_i)^R \,;\, (\bm{o}'_i)^t\,]$ is computed block-wise using the corresponding attention weights:
\begin{align}
    (\bm{o}'_i)^R &\;=\; \gamma^R(i) \sum_{j} \alpha^R_{ij} \, \gamma^R(j)^{-1} 
    \bm{v}_j^R, \label{eq:dppetadd_oR}\\
    (\bm{o}'_i)^t &\;=\; \gamma^t(i) + \sum_{j} \alpha^t_{ij} 
    \bigl( \bm{v}_j^t - \gamma^t(j) \bigr). \label{eq:dppetadd_ot}
\end{align}

This block-wise decoupling ensures that the rotation pattern $\alpha^R$ depends only on rotation-related similarities (via $\gamma^R$) and the translation pattern $\alpha^t$ depends only on translation-related similarities (via $\gamma^t$); the two computations never share information through the attention weights.

\subsection{Per-Token Value Transformation}

The per-token value-side contributions used in \cref{sec:iden_vope} are recovered as follows. For the rotation block,
\begin{align}
    \gamma^R(i) \, \gamma^R(j)^{-1} \bm{v}_j^R \;=\; \left( I_{d/6} \otimes K_i R_i R_j^{\top} K_j^{-1} \right) \bm{v}_j^R,
\end{align}
which depends on $(R_i, R_j, K_i, K_j)$ but not on $(\bm{t}_i, \bm{t}_j)$.
For the translation block, since $\sum_j \alpha^t_{ij} = 1$,
\begin{align}
    (\bm{o}'_i)^t \;=\; \sum_{j} \alpha^t_{ij} \bigl( \bm{v}_j^t + \bm{1}_{d/6} \otimes (\bm{t}_i - \bm{t}_j) \bigr),
\end{align}
so that the per-token contribution is $\bm{v}_j^t + \bm{1}_{d/6} \otimes (\bm{t}_i - \bm{t}_j)$, depending on $(\bm{t}_i, \bm{t}_j)$ but not on $(R_i, R_j)$.
The disjoint dependence is the basis of the injectivity proof in Prop.~\ref{prop:dppe_tadd_inj}.

\section{Identifiability Analysis of Camera Parameters in the Value-Output Positional Encoding}\label{sec:iden_vope}

This appendix formally shows that PRoPE's value-output positional encoding is non-injective in $(\bm{R}, \bm{t})$ at the per-token level, while DPPE restores identifiability.
We use the notation of \cref{sec:explor_prope}: $\bm{P}_i = \bm{K}_i[\bm{R}_i \mid \bm{t}_i]$ is the projection matrix lifted to $\mathbb{R}^{4\times 4}$, and $\bm{v}_{j,[1:3]}$, $\bm{v}_{j,4}$ denote the first three and fourth entries of $\bm{v}_j$.
The analysis concerns the top four dimensions; the remaining blocks (cf. \cref{eq:prope}) follow by identical construction.

\subsection{Non-Identifiability of PRoPE's Value-Output Positional Encoding}

\begin{proposition}[Non-identifiability of PRoPE's value-output positional encoding]
\label{prop:prope_noninj}
Let $\bm{y}$ denote the top three dimensions of PRoPE's value-output positional encoding output (\cref{eq:primal}). For any $\bm{v}_j$ with
$\bm{v}_{j,4} \neq 0$ and any tangent perturbation
$\delta\bm{R}_i$, the translation perturbation
\begin{equation}
\delta\bm{t}_i = -\delta\bm{R}_i \bm{R}_j^{\top}\!\left(\frac{\bm{K}_j^{-1}\bm{v}_{j,[1:3]}}{\bm{v}_{j,4}} - \bm{t}_j\right)
\label{eq:app:prope_compensation}
\end{equation}
satisfies $\delta\bm{y} = \bm{0}$.
\end{proposition}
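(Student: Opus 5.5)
The plan is a direct first-order perturbation argument on the explicit formula \cref{eq:primal}. Only the dependence of $\bm{y}$ on $(\bm{R}_i,\bm{t}_i)$ matters, since the statement perturbs only these while $\bm{K}_i,\bm{K}_j,\bm{R}_j,\bm{t}_j,\bm{v}_j$ stay fixed. From \cref{eq:primal} we have
\begin{equation*}
\bm{y} = \bm{K}_i\Big(\bm{R}_i\bm{R}_j^{\top}\big(\bm{K}_j^{-1}\bm{v}_{j,[1:3]} - \bm{t}_j\bm{v}_{j,4}\big) + \bm{t}_i\bm{v}_{j,4}\Big).
\end{equation*}
The key structural observation is that $\bm{y}$ is $\bm{K}_i$ applied to an expression that is affine in $\bm{R}_i$ and in $\bm{t}_i$ separately. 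Write $\bm{w}_j := \bm{R}_j^{\top}(\bm{K}_j^{-1}\bm{v}_{j,[1:3]} - \bm{t}_j\bm{v}_{j,4})$ for the vector depending only on token $j$. Then $\bm{y} = \bm{K}_i(\bm{R}_i\bm{w}_j + \bm{v}_{j,4}\bm{t}_i)$.

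Step one is to take the differential with respect to $(\bm{R}_i,\bm{t}_i)$ along a tangent direction $\delta\bm{R}_i$, for instance $\delta\bm{R}_i = \bm{R}_i[\bm{\omega}]_\times$ or any element of $T_{\bm{R}_i}SO(3)$, together with a translation direction $\delta\bm{t}_i$. Since the expression is linear in each, this gives
\begin{equation*}
\delta\bm{y} = \bm{K}_i\big(\delta\bm{R}_i\,\bm{w}_j + \bm{v}_{j,4}\,\delta\bm{t}_i\big).
\end{equation*}
There are no second-order cross terms to worry about at first order. Step two is to substitute the proposed compensation \cref{eq:app:prope_compensation}. Factoring $\bm{v}_{j,4}\neq 0$ out of $\bm{w}_j$ shows that $\bm{w}_j = \bm{v}_{j,4}\,\bm{R}_j^{\top}(\bm{K}_j^{-1}\bm{v}_{j,[1:3]}/\bm{v}_{j,4} - \bm{t}_j)$. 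It follows that $\bm{v}_{j,4}\delta\bm{t}_i = -\delta\bm{R}_i\bm{w}_j$, so the bracket vanishes and $\delta\bm{y}=\bm{0}$. I will also note that the fourth output coordinate equals $\bm{v}_{j,4}$ and is untouched, so in fact the whole four-dimensional block is invariant, which strengthens the statement. Finally, I will remark that the same compensation works blockwise for the other $d/4$ Kronecker blocks only if they share the same ratio; the statement concerns a single block, consistent with the appendix's convention.

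The computation is routine, so the main obstacle is interpretive rather than technical. The delicate points are to make precise that ``tangent perturbation'' means a first-order variation on $SO(3)$, and to check the hypothesis $\bm{v}_{j,4}\neq0$ exactly where the division occurs. I will also comment that the compensating $\delta\bm{t}_i$ is always admissible, since $\bm{t}_i$ ranges over all of $\mathbb{R}^3$ with no constraint. This yields a nontrivial direction in the kernel of the differential for every nonzero $\delta\bm{R}_i$, which is the per-token non-injectivity claimed. If desired, I would add that the cancellation is actually exact rather than merely first order: for any finite rotation change $\bm{R}_i\to\bm{R}_i'$, setting $\bm{t}_i' = \bm{t}_i - (\bm{R}_i'-\bm{R}_i)\bm{w}_j/\bm{v}_{j,4}$ leaves $\bm{y}$ unchanged, by the same affine identity.
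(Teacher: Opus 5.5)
Your proposal is correct and is essentially the paper's proof: you differentiate \cref{eq:primal} in $(\bm{R}_i,\bm{t}_i)$ to get $\delta\bm{y}=\bm{K}_i\bigl(\delta\bm{R}_i\bm{w}_j+\bm{v}_{j,4}\,\delta\bm{t}_i\bigr)$ and solve for $\delta\bm{t}_i$ using $\bm{v}_{j,4}\neq 0$. Your extra remarks are also sound and go slightly beyond the paper: the cancellation is exact for finite rotation changes because $\bm{y}$ is affine in $\bm{R}_i$ and in $\bm{t}_i$, and a single $\delta\bm{t}_i$ does not in general cancel across all Kronecker blocks unless they share the same dehomogenized point $\bm{K}_j^{-1}\bm{v}_{j,[1:3]}/\bm{v}_{j,4}$, a caveat that the paper's ``identical construction'' remark glosses over.
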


\begin{proof}
Differentiating \cref{eq:primal} with respect to $(\bm{R}_i, \bm{t}_i)$ gives $\delta\bm{y} = \bm{K}_i \delta\bm{R}_i \bm{R}_j^{\top}\bigl(\bm{K}_j^{-1}\bm{v}_{j,[1:3]} - \bm{t}_j \bm{v}_{j,4}\bigr) + \bm{K}_i \delta\bm{t}_i \bm{v}_{j,4}$.
Since $\bm{K}_i$ is invertible and $\bm{v}_{j,4} \neq 0$, setting $\delta\bm{y} = \bm{0}$ yields \cref{eq:app:prope_compensation}.
\end{proof}

The compensating $\delta\bm{t}_i$ depends on $\bm{v}_j$ through both $\bm{v}_{j,[1:3]}$ and $\bm{v}_{j,4}$, so no fixed downstream transformation can recover $(\bm{R}_i, \bm{t}_i)$ from a single token; disambiguation requires aggregating across tokens with diverse value content.

\subsection{Identifiability of DPPE}

\begin{proposition}[DPPE\textsubscript{tAdd} is injective]
\label{prop:dppe_tadd_inj}
In DPPE\textsubscript{tAdd}, rotation acts on the upper half via $\bm{K}\bm{R}$ (\cref{eq:dppe_tadd_r}) and translation acts additively on the lower half (\cref{eq:dppe_tadd_t}).
Hence rotation and translation perturbations affect disjoint output coordinates and cannot cancel; the map $(\bm{R}, \bm{t}) \mapsto T(\bm{R}, \bm{t}; \bm{v}_j)$ is injective for any $\bm{v}_j$ with $\bm{v}_{j,[1:d/4]} \neq \bm{0}$.
\end{proposition}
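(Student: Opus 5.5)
I will take $T(\bm{R},\bm{t};\bm{v}_j)$ to be the per-token value-output contribution recorded in \cref{sec:self-attn_dppe}. It is the concatenation of the rotation block $(\bm{I}_{d/6}\otimes \bm{K}_i\bm{R}_i\bm{R}_j^{\top}\bm{K}_j^{-1})\bm{v}_j^R$ and the translation block $\bm{v}_j^t+\bm{1}_{d/6}\otimes(\bm{t}_i-\bm{t}_j)$. The variables are $(\bm{R},\bm{t})=(\bm{R}_i,\bm{t}_i)$, with $j$'s camera and $\bm{K}_i$ held fixed, mirroring Prop.~\ref{prop:prope_noninj}. The plan is to suppose $T(\bm{R},\bm{t};\bm{v}_j)=T(\bm{R}',\bm{t}';\bm{v}_j)$ and read off the two halves separately. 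Because the rotation block contains no $\bm{t}$ and the translation block contains no $\bm{R}$, equality of the concatenation splits into two independent equations, and these can be solved one at a time. This is the precise form of ``perturbations affect disjoint coordinates and cannot cancel.''

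\textbf{Translation half.} Equality of the translation blocks gives $\bm{1}_{d/6}\otimes(\bm{t}-\bm{t}')=\bm{0}$. Reading any one of the $d/6$ copies yields $\bm{t}=\bm{t}'$ immediately, and no condition on $\bm{v}_j$ is needed.

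\textbf{Rotation half.} Write $\bm{w}=\bm{R}_j^{\top}\bm{K}_j^{-1}\bm{v}_{j,[1:3]}$ for the first $3$-block of $\bm{v}_j^R$, and similarly $\bm{w}^{(m)}$ for the $m$-th block. Equality of the rotation blocks gives $\bm{K}_i\bm{R}\bm{w}^{(m)}=\bm{K}_i\bm{R}'\bm{w}^{(m)}$ for every $m$. Since $\bm{K}_i$ is invertible, this reduces to $(\bm{R}-\bm{R}')\bm{w}^{(m)}=\bm{0}$. The hypothesis $\bm{v}_{j,[1:d/4]}\neq\bm{0}$ guarantees that some blocks $\bm{w}^{(m)}$ are nonzero.

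\textbf{Main obstacle.} A single nonzero vector only pins an element of $SO(3)$ up to rotations about $\bm{w}$, so global injectivity of $\bm{R}$ from one $3$-block is false. I would handle this in two ways.
\begin{enumerate}
\item State the result at the level the paper uses throughout, namely the linearized (tangent) sense of Prop.~\ref{prop:prope_noninj}. There the content is that a rotation perturbation $\delta\bm{R}$ can never be compensated by a translation perturbation $\delta\bm{t}$: the variation is $(\bm{K}_i\delta\bm{R}\,\bm{w}^{(m)},\ \bm{1}\otimes\delta\bm{t})$, and it vanishes only if $\delta\bm{t}=\bm{0}$ and $\delta\bm{R}\,\bm{w}^{(m)}=\bm{0}$ separately. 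This is the decoupling claim, in contrast with \cref{eq:app:prope_compensation}.
\item For full injectivity of $\bm{R}$, use that $\bm{v}_{j,[1:d/4]}$ spans several $3$-blocks. If two of the $\bm{w}^{(m)}$ are linearly independent, then $\bm{R}^{\prime\top}\bm{R}$ fixes both vectors. It therefore fixes their cross product, hence all of $\mathbb{R}^3$, so $\bm{R}=\bm{R}'$.
\end{enumerate}
I would state this genericity condition explicitly as the interpretation of the nondegeneracy hypothesis $\bm{v}_{j,[1:d/4]}\neq\bm{0}$. I would also note that the remaining one-parameter ambiguity in the degenerate case involves $\bm{R}$ alone, never $\bm{t}$, so the rotation–translation identifiability conclusion holds regardless.
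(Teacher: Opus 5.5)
Your proof follows the paper's core idea. The rotation block depends only on $\bm{R}$ and the translation block only on $\bm{t}$, so the two cannot compensate each other, and the translation block returns $\bm{t}_i$ with no condition on $\bm{v}_j$. The differences lie in rigor, and there your version is better than the paper's. The paper works only at the tangent level, perturbing $(\bm{R}_i,\bm{t}_i)$ and $(\bm{R}_j,\bm{t}_j)$ together. It then asserts that every nonzero rotation perturbation gives $\delta T_R\neq\bm{0}$ whenever $\bm{v}_j^R\neq\bm{0}$.

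Your ``main obstacle'' shows this assertion is false. Write $\delta\bm{R}_i=[\bm{\omega}]_\times\bm{R}_i$. Then the $m$-th block of $\delta T_R$ is $\bm{K}_i\bigl(\bm{\omega}\times\bm{R}_i\bm{w}^{(m)}\bigr)$. This vanishes for all $m$ whenever $\bm{\omega}$ is parallel to every nonzero $\bm{R}_i\bm{w}^{(m)}$, for example when only one $3$-block of $\bm{v}_j^R$ is nonzero. The same failure occurs globally: $\bm{R}'=\exp([\theta\hat{\bm{u}}]_\times)\bm{R}$ with $\hat{\bm{u}}\parallel\bm{R}\bm{w}$ gives an identical output.

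If $(\bm{R}_j,\bm{t}_j)$ also varies, as in the paper's differentiation, there are further invariances. The maps $\bm{R}_i\mapsto\bm{R}_i\bm{Q}$, $\bm{R}_j\mapsto\bm{R}_j\bm{Q}$ and $\bm{t}_i\mapsto\bm{t}_i+\bm{c}$, $\bm{t}_j\mapsto\bm{t}_j+\bm{c}$ leave $T$ unchanged. So holding $j$'s camera fixed, as you do, is the only reading under which injectivity can hold.

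The proposition as literally stated is therefore false. What the paper's argument actually establishes under $\bm{v}_{j,[1:d/4]}\neq\bm{0}$ is your item 1: $\bm{t}$ is always recovered, and no rotation perturbation can be absorbed by a translation perturbation.

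Your item 2 is the right repair. Two linearly independent blocks force $\bm{R}'^{\top}\bm{R}$ to fix both vectors. Because this matrix has determinant $+1$, it also fixes their cross product, so $\bm{R}=\bm{R}'$. The $SO(3)$ constraint is genuinely used here, since a reflection across their span would fix both vectors.

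Present this as a strengthened hypothesis, not as an ``interpretation'' of $\bm{v}_{j,[1:d/4]}\neq\bm{0}$. The stated condition does not imply it, and you have yourself exhibited the counterexample. A minor point: $\bm{w}$ is the image $\bm{R}_j^{\top}\bm{K}_j^{-1}\bm{v}_{j,[1:3]}$ of the first block, not the block itself.
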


\begin{proof}
From \cref{eq:dppe_tadd_r}, the rotation block of the value-domain output (the upper half) is
\begin{equation}
T_R = \bigl(\bm{I}_{d/6} \otimes \bm{K}_i \bm{R}_i \bm{R}_j^{\top} \bm{K}_j^{-1}\bigr) \bm{v}_j^R,
\end{equation}
which depends on $(\bm{R}_i, \bm{R}_j)$ but not on $(\bm{t}_i, \bm{t}_j)$. From \cref{eq:dppe_tadd_t}, the translation block (the lower half) is
\begin{equation}
T_t = \bm{v}_j^t + \bm{I}_{d/6} \otimes (\bm{t}_i - \bm{t}_j),
\end{equation}
which depends on $(\bm{t}_i, \bm{t}_j)$ but not on $(\bm{R}_i, \bm{R}_j)$. Differentiating yields
\begin{align}
\delta T_R &= \bm{I}_{d/6} \otimes \bm{K}_i\!\left(\delta\bm{R}_i \bm{R}_j^{\top} + \bm{R}_i \delta\bm{R}_j^{\top}\right)\!\bm{K}_j^{-1} \bm{v}_j^R, \\
\delta T_t &= \bm{I}_{d/6} \otimes (\delta\bm{t}_i - \delta\bm{t}_j).
\end{align}
Since $T_R$ and $T_t$ occupy disjoint coordinate blocks, their perturbations cannot cancel one another.
In particular, any non-zero rotation perturbation produces a non-zero $\delta T_R$ (provided $\bm{v}_j^R \neq \bm{0}$ and $\bm{K}_i, \bm{K}_j$ invertible), which no translation perturbation can compensate; symmetrically for translations.
\end{proof}

\begin{proposition}[DPPE\textsubscript{dual} is sequentially identifiable]
\label{prop:dppe_dual_partial}
The first three of the top four dimensions of DPPE\textsubscript{dual}'s output (\cref{eq:dual_v}) depend only on rotations and intrinsics, with no translation contribution. Hence rotations are uniquely identifiable from these dimensions; conditional on rotations, the fourth dimension is an affine function of $(\bm{t}_i, \bm{t}_j)$ with known coefficients, making translations identifiable.
\end{proposition}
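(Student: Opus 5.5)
We read the statement off the explicit form \cref{eq:dual_v} and argue in the same style as Propositions~\ref{prop:prope_noninj} and~\ref{prop:dppe_tadd_inj}: first a structural observation, then a perturbation (tangent-level) identifiability argument. First, we verify \cref{eq:dual_v}. Write the lifted projection matrix as
\begin{equation*}
\bm{P}_i=\begin{pmatrix}\bm{K}_i\bm{R}_i & \bm{K}_i\bm{t}_i\\ \bm{0}^\top & 1\end{pmatrix}.
\end{equation*}
Compute $\bm{P}_i^{-1}$ blockwise. Its upper-left block is $\bm{R}_i^\top\bm{K}_i^{-1}$, its upper-right block is $-\bm{R}_i^\top\bm{t}_i$, and its last row is $(\bm{0}^\top,1)$. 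Transposing gives $\bm{P}_i^{-T}$, with upper-left block $\bm{K}_i^{-T}\bm{R}_i$, lower-left row $-\bm{t}_i^\top\bm{R}_i$, and last column $(\bm{0},1)^\top$. Likewise $\bm{P}_j^T$ has upper-left block $\bm{R}_j^\top\bm{K}_j^\top$, lower-left row $\bm{t}_j^\top\bm{K}_j^\top$, and last column $(\bm{0},1)^\top$. Multiplying these and applying the product to $\bm{v}_{j,[1:4]}$ reproduces \cref{eq:dual_v}. The key structural fact is that the upper-right block of $\bm{P}_i^{-T}\bm{P}_j^T$ is zero. Hence the first three output coordinates
\begin{equation*}
\bm{y}_{[1:3]}=\bm{K}_i^{-T}\bm{R}_i\bm{R}_j^\top\bm{K}_j^\top\bm{v}_{j,[1:3]}
\end{equation*}
contain neither $\bm{t}_i$, $\bm{t}_j$, nor $\bm{v}_{j,4}$.

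Second, we establish rotation identifiability. Differentiating with respect to $(\bm{R}_i,\bm{t}_i)$ (and similarly in $\bm{R}_j$, $\bm{t}_j$) gives
\begin{equation*}
\delta\bm{y}_{[1:3]}=\bm{K}_i^{-T}\,\delta\bm{R}_i\,\bm{R}_j^\top\bm{K}_j^\top\bm{v}_{j,[1:3]}.
\end{equation*}
This quantity is independent of $\delta\bm{t}$, so no translation perturbation can compensate a rotation change here; this is exactly the cancellation mechanism of Proposition~\ref{prop:prope_noninj}, and it is absent. For genuine identifiability we need $\delta\bm{R}_i\bm{u}\neq\bm{0}$ with $\bm{u}=\bm{R}_j^\top\bm{K}_j^\top\bm{v}_{j,[1:3]}$. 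Since $\delta\bm{R}_i=[\bm{\omega}]_\times\bm{R}_i$, this condition reads $\bm{\omega}\times\bm{R}_i\bm{u}\neq\bm{0}$.

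This is the main obstacle. A single three-dimensional value block determines a rotation only up to the axis along $\bm{R}_i\bm{u}$. We plan to resolve it as the paper does in \cref{sec:iden_vope}, by using the $d/4$ repeated blocks of the Kronecker structure $\bm{I}_{d/4}\otimes\bm{P}^{-T}_i$. Two blocks with linearly independent $\bm{u}$'s already pin down $\bm{\omega}=\bm{0}$, and hence $\bm{R}_i\bm{R}_j^\top$. We will state this as a mild genericity assumption on $\bm{v}_j$, analogous to the condition $\bm{v}_{j,[1:d/4]}\neq\bm{0}$ in Proposition~\ref{prop:dppe_tadd_inj}.

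Third, conditional on the identified rotations and the known intrinsics, the fourth coordinate is
\begin{equation*}
y_4=-\bm{t}_i^\top\bm{a}+\bm{t}_j^\top\bm{b}+\bm{v}_{j,4},\qquad \bm{a}=\bm{R}_i\bm{R}_j^\top\bm{K}_j^\top\bm{v}_{j,[1:3]},\quad \bm{b}=\bm{K}_j^\top\bm{v}_{j,[1:3]}.
\end{equation*}
This is affine in $(\bm{t}_i,\bm{t}_j)$ with coefficients that are known once the rotations are fixed. Its variation is $\delta y_4=-\delta\bm{t}_i^\top\bm{a}$, which is nonzero whenever $\delta\bm{t}_i$ has a component along $\bm{a}\neq\bm{0}$. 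Stacking the fourth coordinates across the $d/4$ blocks with spanning $\bm{a}$'s gives a linear system of full rank, which yields unique $\bm{t}_i$ (given $\bm{t}_j$, or jointly up to the relative gauge in the same way as for rotations). Because the rotations were fixed using coordinates free of $\bm{t}$, the identification is sequential, and there is no joint $(\delta\bm{R},\delta\bm{t})$ cancellation of the form in \cref{eq:app:prope_compensation}. This completes the plan.
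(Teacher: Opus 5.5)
Your argument is correct and has the same skeleton as the paper's proof. The paper also observes that $\bm{y}_R=\bm{K}_i^{-\top}\bm{R}_i\bm{R}_j^{\top}\bm{K}_j^{\top}\bm{v}_{j,[1:3]}$ has zero derivative in the translations and concludes that rotations are identifiable from it. It then treats the fourth coordinate as the affine function $-\bm{t}_i^{\top}\bm{a}+\bm{t}_j^{\top}\bm{b}+\bm{v}_{j,4}$, arguing only that $\bm{a},\bm{b}\neq\bm{0}$ whenever $\bm{v}_{j,[1:3]}\neq\bm{0}$.

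The difference is in what ``identifiable'' is made to mean. The paper's proof establishes only the absence of cross-compensation: translations cannot move $\bm{y}_R$, and the translation effect is confined to one coordinate. It does not address the point you raise. A single block fixes $\bm{R}_i\bm{R}_j^{\top}$ only up to rotation about $\bm{R}_i\bm{u}$, and a single scalar $y_4$ gives only one linear functional of a three-dimensional translation.

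Your multi-block step upgrades this to genuine tangent-level injectivity, at the price of an explicit genericity assumption on $\bm{v}_j$. It uses two linearly independent value sub-vectors to pin the rotation and three spanning ones to pin the translation. You also observe that only gauge-invariant relative quantities, $\bm{R}_i\bm{R}_j^{\top}$ and $\bm{t}_j-\bm{R}_j\bm{R}_i^{\top}\bm{t}_i$, can be recovered.

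This step is your own addition, not something the paper does. The appendix only remarks that the remaining blocks ``follow by identical construction,'' so present it as a strengthening rather than as following the paper. Similarly, your blockwise inversion verifying the dual form is a computation the paper takes for granted.
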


\begin{proof}
From \cref{eq:dual_v}, the top three output dimensions are
\begin{equation}
\bm{y}_R := \bm{K}_i^{-\top} \bm{R}_i \bm{R}_j^{\top} \bm{K}_j^{\top} \bm{v}_{j,[1:3]},
\label{eq:app:dual_yR}
\end{equation}
which depends on $(\bm{R}_i, \bm{R}_j)$ but contains no $(\bm{t}_i, \bm{t}_j)$ terms. Differentiating with respect to translations yields $\delta\bm{y}_R = \bm{0}$, so no translation perturbation can affect $\bm{y}_R$, and rotations are identifiable from $\bm{y}_R$ independently of translations.

Conditional on $(\bm{R}_i, \bm{R}_j)$ being fixed, the fourth output dimension becomes an affine function of $(\bm{t}_i, \bm{t}_j)$:
\begin{equation}
y_t = -\bm{t}_i^{\top} \bm{a} + \bm{t}_j^{\top} \bm{b} + \bm{v}_{j,4},
\label{eq:app:dual_yt}
\end{equation}
where $\bm{a} := \bm{R}_i \bm{R}_j^{\top} \bm{K}_j^{\top} \bm{v}_{j,[1:3]}$ and $\bm{b} := \bm{K}_j^{\top} \bm{v}_{j,[1:3]}$ are known constants once rotations and intrinsics are fixed.
Provided $\bm{v}_{j,[1:3]} \neq \bm{0}$ and the intrinsics are invertible, both $\bm{a}$ and $\bm{b}$ are non-zero, so $y_t$ varies non-trivially with $(\bm{t}_i, \bm{t}_j)$ and the translation contribution is identifiable from $y_t$.
\end{proof}

\paragraph{Connection to training dynamics.}
The propositions above are per-token statements: for any single edge $(i, j)$, PRoPE's value-output transformation cannot uniquely identify $(\bm{R}_i, \bm{t}_i)$, whereas DPPE can.
The full PRoPE model can in principle disambiguate $(\bm{R}_i, \bm{t}_i)$ by aggregating across edges via the weighted sum $\sum_j \alpha_{ij}\, \gamma^{\mathrm{Proj}}(j)^{-1} \bm{v}_j$ in \cref{eq:prope_attn}.
We hypothesize, however, that the absence of per-token identifiability shifts the burden of disambiguation onto the rest of the network and manifests as the late-stage training degradation of PRoPE in \cref{fig:degradation}.
DPPE removes this burden by providing per-token identifiability of $(\bm{R}_i, \bm{t}_i)$. Furthermore, DPPE\textsubscript{tAdd} enforces a stronger form of decoupling at the architectural level: the rotation and translation blocks maintain independent attention patterns $\alpha^R_{ij}$ and $\alpha^t_{ij}$ (\cref{sec:self-attn_dppe}), so that translation-induced biases in attention (\textit{e.g.}, spurious bonds between distant tokens do not contaminate the aggregation of the rotation block).

\section{Experiment Details}\label{sec:exp_details}
For all tables in \cref{sec:explor_prope,sec:exp}, we present the values smoothed by applying an exponential moving average (EMA) with a decay factor of 0.75 to the evaluation results from checkpoints saved every 5k iterations between 275k and 320k iterations in order to mitigate evaluation fluctuations across iterations and ensure the reliability of the results.
Furthermore, \cref{fig:pe_comparison_b} shows the smoothed values using the same EMA with a decay factor of 0.75 across all evaluation steps from the start of training.

EMA mitigates iteration-wise variance while accurately reflecting the model's performance up to the final iteration.
Therefore, in this study, we determined and applied an decay factor for each experiment to smooth fluctuations while properly capturing the true final performance.

\subsection{Training Details of Exploring Camera-based Positional Encoding in \cref{sec:explor_prope}}
\label{app:exp_prope_detail}
For the exploratory experiments on camera-based positional encodings, we generally followed the default settings provided in the official PRoPE repository~\cite{prope}.
The image resolution is fixed at $256 \times 256$, and the world batch size is set to 64.
To facilitate analysis and reduce computational costs in these experiments, we retain the default number of input views 2 and target views 1.

We adopt a warmup-stable scheduler, which maintains a learning rate of $5 \times 10^{-4}$ after a 500-step warmup period.
This approach omits the decay phase from the warmup-stable-decay scheduler~\cite{wsd}, which has recently become the standard for the long-term training of large language models~\cite{minicpm,deepseekr1,deepseekv3,qwen3,kimi,smollm3,llada2}.
Training was conducted via distributed learning on eight H100 GPUs, taking approximately 20 hours per model.

\subsection{Training Details in \cref{sec:exp}}\label{app:exp}
The training settings for these experiments are fundamentally identical to those described in \cref{sec:explor_prope}.
However, to evaluate more practical capabilities, we trained the models by randomly varying the number of input views between 2 and 4, while keeping the number of target views fixed at 1.
As a result of this modification, the training time increased to approximately 24 hours per model.

\subsection{Multi-View Sampling Strategy}\label{sec:sampleing_strategy}
For the novel view synthesis (NVS) task, it is essential that a certain degree of subject overlap exists among the sampled images.

Regarding RealEstate10K, we used the configurations exactly as provided, since the sampling index ranges for training, along with the IDs and image indices for the evaluation data, are available in the official PRoPE~\cite{prope} repository.

For MVImgNet2 and SpatialVidHQ, we design an image sampling method because they do not provide the sampling settings: An initial image is randomly sampled, and the selection probability is configured to increase as the angular and spatial distances from the reference camera position increase.
Regarding the angle, we introduce an additional constraint: only images located within 60 degrees of any previously sampled camera are considered as candidates.

Consequently, for MVImgNet2, this approach enables the extraction of viewpoints with appropriate subject overlap while maintaining an angular difference of 60 degrees or less.
For SpatialVidHQ, this ensures that spatially distant images are appropriately sampled for camera shift, while images maintaining subject overlap are suitably selected for camera panning.

During training, this sampling process is iterated to acquire multiple images.
A randomly selected non-edge camera among the sampled images is assigned as the target view, while the remaining images are used as input views.
For evaluation, we prepare a split of approximately 6,000 instances in advance.
By applying this sampling algorithm beforehand to fix the indices, we ensured consistent evaluations on the identical dataset across every evaluation iteration.

\section{Performance Degradation in Scaled-up Training}\label{sec:degration}

\begin{figure}[t]
  \center
  \includegraphics[width=\textwidth]{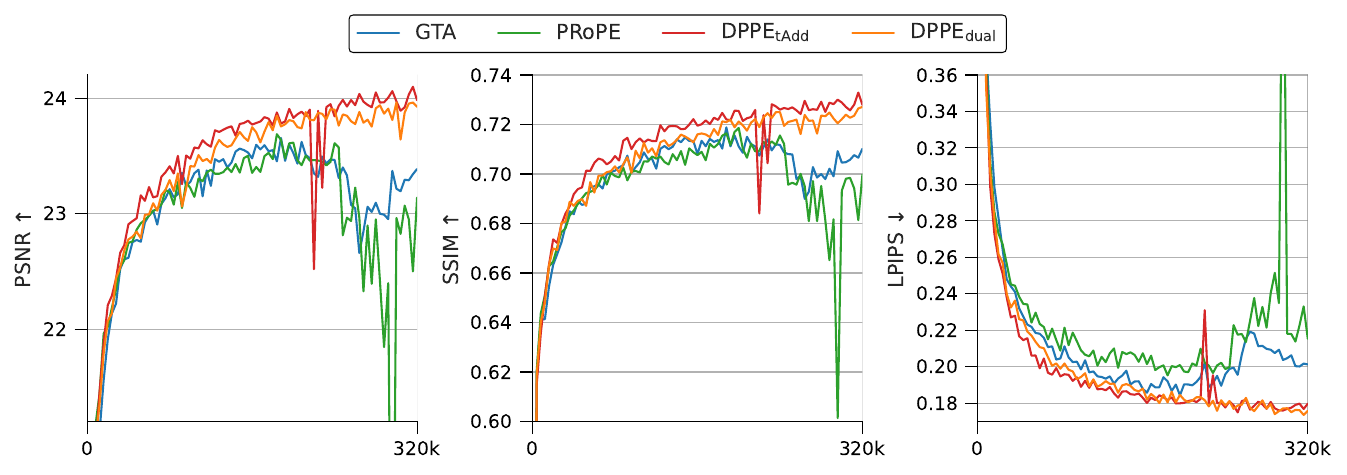}
  \caption{\textbf{Evaluation results of novel view synthesis on the MVImgNet2 dataset at each step.} While the performance of GTA~\cite{gta} and PRoPE~\cite{prope} degrades, DPPE consistently improves the performance.}
  \label{fig:degradation}
\end{figure}

When performance begins to degrade during training, selecting the best-performing checkpoint (\textit{i.e.}, early stopping) is a common strategy to maintain high performance. \cref{fig:degradation}, plotting the raw metrics on MVImgNet2 from the experiments in \cref{tab:main_train} and an additionally conducted experiment for GTA~\cite{gta}, demonstrates that the performance of PRoPE begins to drop around 240k iterations, with PSNR and LPIPS peaking at 184k iterations, and SSIM peaking at 200k iterations. Accordingly, we evaluate PRoPE using checkpoints at 184k, 200k, and 240k iterations. Similarly, GTA—which is PRoPE without intrinsic parameter information and with coupled rotation and translation—exhibits degradation similar to PRoPE in the later stages of training, as also indicated in the ($\bm{R}, \bm{t}$) row of \cref{tab:value_pe_ablation}. For GTA, PSNR and SSIM peak at 188k iterations, and LPIPS peaks at 196k iterations. The evaluation results using these optimal checkpoints are presented in \cref{tab:ckpts}.

As shown in \cref{tab:ckpts}, even when using their respective best checkpoints, both PRoPE and GTA fall short of the performance achieved by DPPE at the final iteration. Furthermore, determining the training schedule based solely on the behavior of the positional encoding—which is merely one component of the entire architecture—prevents the overall model from realizing its full potential and is therefore impractical.

\begin{table}[t]
    \captionsetup{aboveskip=0pt}
    \centering
    \caption{\textbf{Comparison of iterations.}}
    \label{tab:ckpts}
    \begin{tabular}{@{}lccc@{}}
      \toprule
      Method & PSNR$\uparrow$ & SSIM$\uparrow$ & LPIPS$\downarrow$ \\
      \midrule
      GTA\textsubscript{@188k} &  23.65 & 0.719 & 0.187\\
      GTA\textsubscript{@196k} &  23.60 & 0.716 & 0.184 \\
      GTA\textsubscript{@240k} & 23.46 & 0.712 & 0.196 \\
      GTA\textsubscript{@320k} & 23.39 & 0.710 & 0.201 \\
      PRoPE\textsubscript{@184k} & 23.67 & 0.715 & 0.195 \\
      PRoPE\textsubscript{@200k} & 23.61 & 0.719 & 0.197 \\
      PRoPE\textsubscript{@240k} & 23.61 & 0.716 & 0.199 \\
      PRoPE\textsubscript{@320k} & 23.14 & 0.700 & 0.215 \\
      DPPE\textsubscript{dual@320k}  & 23.92 & 0.727 & \textbf{0.176} \\
      DPPE\textsubscript{tAdd@320k}  & \textbf{23.98} & \textbf{0.728} & 0.180 \\
      \bottomrule
    \end{tabular}
\end{table}

\section{Additional Experiments}
\subsection{Comprehensive Dataset Extension and Generalization to Ray-Space Encodings}

\begin{table}[t]
    \centering
    \setlength{\fboxsep}{0pt}
    \setlength{\tabcolsep}{2pt}
    \caption{\textbf{Comparison of camera-based positional encodings across datasets.} We evaluate DPPE and PRoPE on various datasets. All values represent the mean over 3 seeds. For each metric, the best, second-best, and third-best results are highlighted in \colorbox{red!40}{red} and \colorbox{orange!30}{orange}, respectively, within each block separated by dashed lines.}
    \label{tab:extended_NVS_eval}
    \begin{NiceTabular}{ll c |[tikz=dashed] cccc |[tikz=dashed] ccc}
        \toprule
        Dataset & Metric & Pl\"ucker & GTA & PRoPE & DPPE\textsubscript{dual} & DPPE\textsubscript{tAdd} & UCPE & UCPE\textsubscript{dual} & UCPE\textsubscript{tAdd} \\
        \midrule
        \multirow{3}{*}{MVImgNet2}
        & PSNR$\uparrow$ & $23.07$ & $23.28$ & $22.91$ & \colorbox{orange!30}{$23.92$} & \colorbox{red!40}{${24.02}$}  & $23.27$ & \colorbox{orange!30}{$23.93$} & \colorbox{red!40}{$23.95$}\\
        & SSIM$\uparrow$ & $0.682$ & $0.706$ & $0.696$ & \colorbox{orange!30}{$0.726$} & \colorbox{red!40}{${0.730}$}  & $0.704$ & \colorbox{red!40}{$0.726$} & \colorbox{orange!30}{$0.725$}\\
        & LPIPS$\downarrow$ & $0.224$ & $0.203$ & $0.217$ & \colorbox{red!40}{${0.178}$} & \colorbox{orange!30}{$0.180$}  & $0.205$ & \colorbox{orange!30}{$0.179$} & \colorbox{red!40}{$0.176$}\\
        \midrule
        \multirow{3}{*}{RealEstate10K}
        & PSNR$\uparrow$ & $26.33$ & $24.15$ & $24.14$ & \colorbox{orange!30}{$24.16$} & \colorbox{red!40}{${24.27}$}  & $23.05$ & \colorbox{orange!30}{$23.66$} & \colorbox{red!40}{$23.75$}\\
        & SSIM$\uparrow$ & $0.849$ & $0.847$ & \colorbox{orange!30}{$0.851$} & $0.850$ & \colorbox{red!40}{${0.855}$}  & $0.831$ & \colorbox{orange!30}{$0.842$} & \colorbox{red!40}{$0.843$}\\
        & LPIPS$\downarrow$ & $0.135$ & $0.141$ & \colorbox{orange!30}{$0.137$} & $0.138$ & \colorbox{red!40}{${0.133}$}  & $0.158$ & \colorbox{red!40}{$0.146$} & \colorbox{red!40}{$0.146$}\\
        \midrule
        \multirow{3}{*}{SpatialVidHQ}
        & PSNR$\uparrow$ & $19.23$ & $19.00$ & $19.21$ & \colorbox{orange!30}{$19.26$} & \colorbox{red!40}{${19.28}$}  & \colorbox{orange!30}{$18.88$} & $18.76$ & \colorbox{red!40}{$19.13$}\\
        & SSIM$\uparrow$ & $0.610$ & $0.613$ & $0.626$ & \colorbox{orange!30}{$0.629$} & \colorbox{red!40}{${0.630}$}  & \colorbox{orange!30}{$0.612$} & $0.606$ & \colorbox{red!40}{$0.624$}\\
        & LPIPS$\downarrow$ & $0.278$ & $0.278$ & $0.268$ & \colorbox{orange!30}{$0.266$} & \colorbox{red!40}{${0.265}$}  & \colorbox{orange!30}{$0.286$} & $0.294$ & \colorbox{red!40}{$0.274$}\\
        \midrule
        \multirow{3}{*}{CO3D}
        & PSNR$\uparrow$ & $21.54$& {$20.06$} &$19.91$ & \colorbox{orange!30}{$20.55$} & \colorbox{red!40}{$20.81$} & {$20.56$} & \colorbox{orange!30}{$20.87$} & \colorbox{red!40}{$20.92$}\\
        & SSIM$\uparrow$ & $0.688$& {$0.689$} &$0.686$ & \colorbox{orange!30}{$0.702$}& \colorbox{red!40}{$0.709$} &$0.703$ & \colorbox{red!40}{$0.712$} & \colorbox{orange!30}{$0.711$}\\
        & LPIPS$\downarrow$ & $0.291$& {$0.290$} &$0.295$ & \colorbox{orange!30}{$0.256$} & \colorbox{red!40}{$0.245$} & {$0.260$} & \colorbox{red!40}{$0.244$} & \colorbox{orange!30}{$0.245$}\\
        \midrule
        \multirow{3}{*}{WildRGBD}
        & PSNR$\uparrow$ & $16.39$&$16.88$ & {$17.20$} & \colorbox{red!40}{$17.33$}& \colorbox{orange!30}{$17.27$} &$17.13$ & \colorbox{red!40}{$17.42$} & \colorbox{orange!30}{$17.23$}\\
        & SSIM$\uparrow$ & $0.526$&$0.553$ & {$0.564$} & \colorbox{red!40}{$0.570$} & \colorbox{orange!30}{$0.565$}&$0.558$ & \colorbox{red!40}{$0.571$} & \colorbox{orange!30}{$0.563$}\\
        & LPIPS$\downarrow$ & $0.422$&$0.363$ & \colorbox{orange!30}{$0.325$} & \colorbox{red!40}{$0.321$} & $0.338$&$0.362$ & \colorbox{red!40}{$0.317$} & \colorbox{orange!30}{$0.345$}\\
        \midrule
        \multirow{3}{*}{MegaSynth}
        & PSNR$\uparrow$ & $18.07$&$18.18$ & $18.20$& \colorbox{red!40}{$18.37$} & \colorbox{orange!30}{$18.33$} &$17.78$ & \colorbox{red!40}{$18.44$} & \colorbox{orange!30}{$18.31$}\\
        & SSIM$\uparrow$ & $0.513$&$0.538$ & $0.540$& \colorbox{red!40}{$0.549$} & \colorbox{orange!30}{$0.547$} &$0.512$ & \colorbox{red!40}{$0.556$} & \colorbox{orange!30}{$0.549$}\\
        & LPIPS$\downarrow$ & $0.354$&$0.335$ & $0.333$& \colorbox{red!40}{$0.320$} & \colorbox{orange!30}{$0.324$} &$0.371$ & \colorbox{red!40}{$0.317$} & \colorbox{orange!30}{$0.323$}\\
        \bottomrule
    \end{NiceTabular}
\end{table}

To verify the robustness and generalizability of the proposed decoupling framework, we extend our evaluation across two distinct axes: dataset diversity and alternative geometric encodings.
\newpage
\begin{figure}[H]
    \centering
    \begin{subfigure}[t]{0.95\textwidth}
        \includegraphics[width=\textwidth]{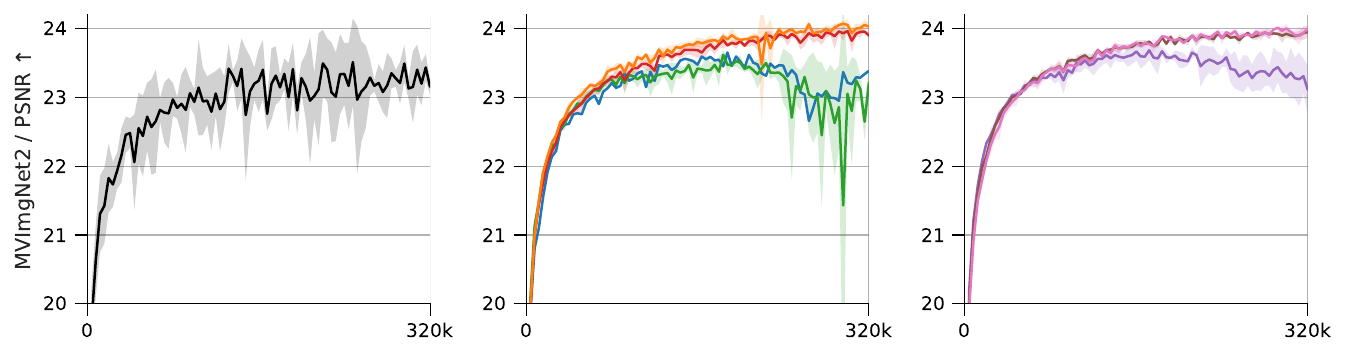}
        \label{fig:}
    \end{subfigure}
    \par
    \vspace*{-16pt}
    \begin{subfigure}[t]{0.95\textwidth}
        \includegraphics[width=\textwidth]{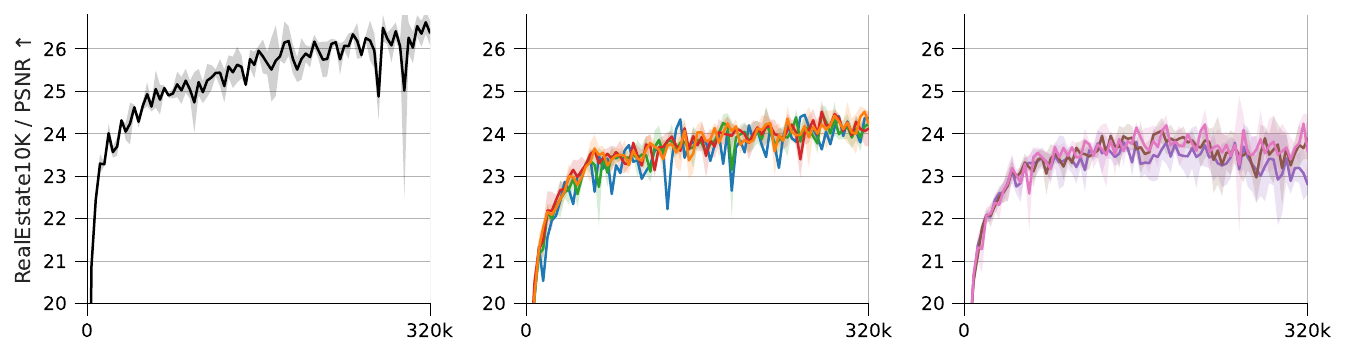}
        \label{fig:}
    \end{subfigure}
    \par
    \vspace*{-16pt}
    \begin{subfigure}[t]{0.95\textwidth}
        \includegraphics[width=\textwidth]{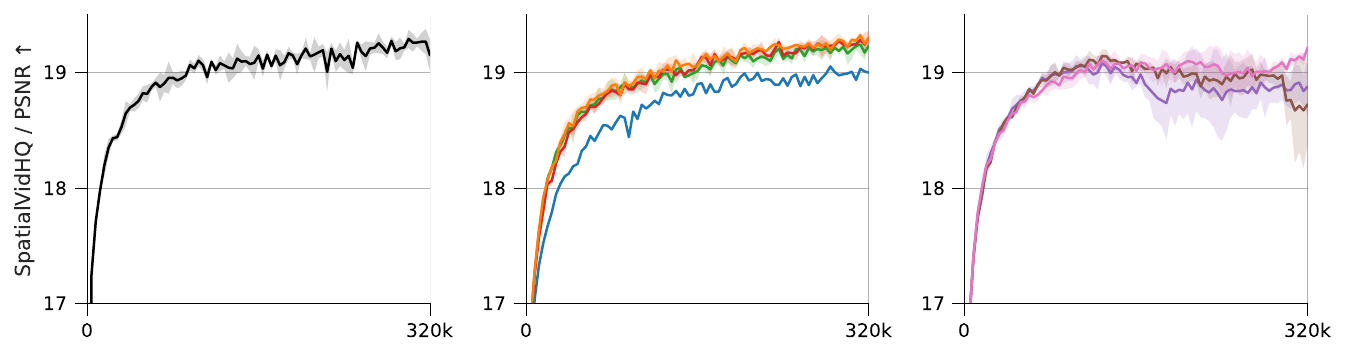}
        \label{fig:}
    \end{subfigure}
    \par
    \vspace*{-16pt}
    \begin{subfigure}[t]{0.95\textwidth}
        \includegraphics[width=\textwidth]{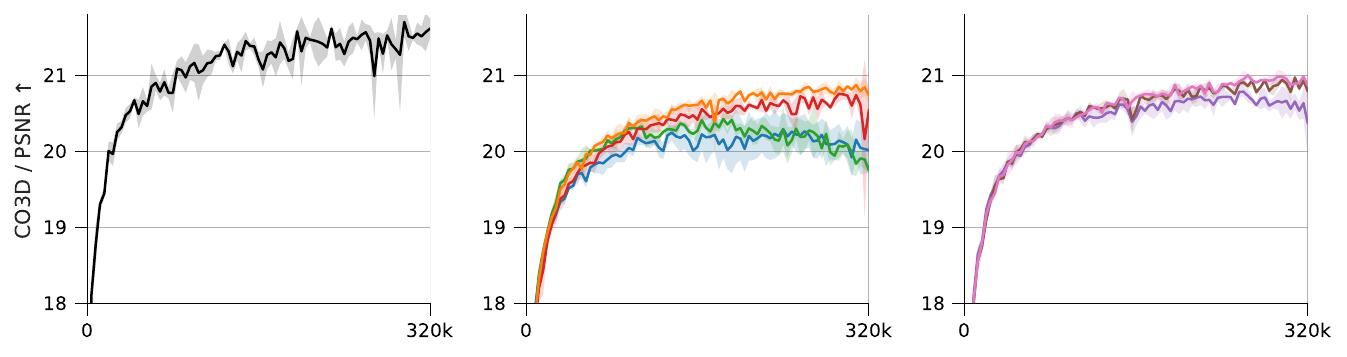}
        \label{fig:}
    \end{subfigure}
    \par
    \vspace*{-16pt}
    \begin{subfigure}[t]{0.95\textwidth}
        \includegraphics[width=\textwidth]{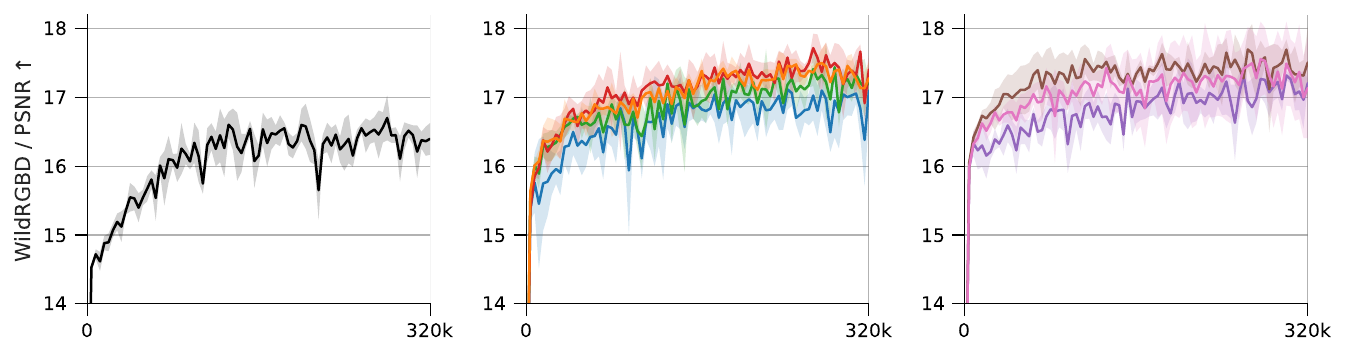}
        \label{fig:}
    \end{subfigure}
    \par
    \vspace*{-16pt}
    \begin{subfigure}[t]{0.95\textwidth}
        \includegraphics[width=\textwidth]{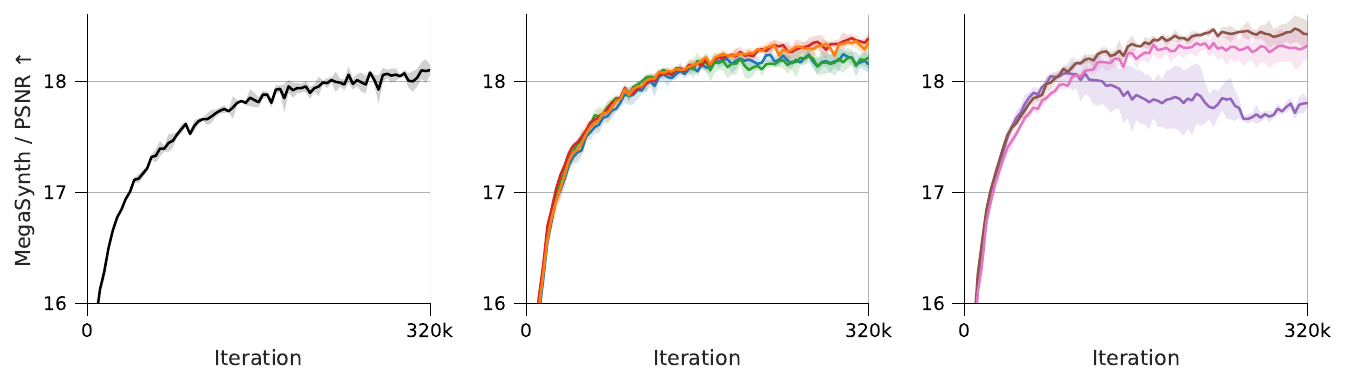}
        \label{fig:}
    \end{subfigure}
    \vspace*{-16pt}
    \caption{\textbf{Extended NVS performance (PSNR) across six datasets.} Results represent the mean over three seeds, and the shaded regions indicate the standard deviation. The learning curves show that our decoupled variants (\text{DPPE} and \text{UCPE's}) consistently achieve superior or competitive robustness and training stability compared to coupled baselines.}
    \label{fig:extended_NVS}
\end{figure}

\paragraph{Evaluation Across Six Diverse Datasets} We evaluate the novel view synthesis (NVS) performance of DPPE against existing baselines across six datasets featuring highly varied scene structures, scale scales, and motion dynamics: MVImgNet2~\cite{mvimgnet2}, RealEstate10K~\cite{re10k}, SpatialVidHQ~\cite{spatialvid}, CO3D~\cite{co3d}, WildRGBD~\cite{wildrgbd}, and MegaSynth~\cite{megasynth}. This broad suite tests the models under both object-centric constraints (e.g., MVImgNet2, CO3D) and complex trajectory-based environments (e.g., RealEstate10K, SpatialVidHQ).

\paragraph{Generalization to Token-Level Ray Spaces (UCPE)} We investigate whether the benefits of decoupling apply beyond image-level camera matrices. We implement our decoupling strategy on Unified Camera Positional Encoding (UCPE)~\cite{ucpe}, a formulation that instantiates a ray space for each patch token rather than applying a global camera matrix per image. Following the principles detailed in \cref{sec:self-attn_dppe}, we construct UCPE\textsubscript{dual} and UCPE\textsubscript{tAdd} to assess if separating rotation and translation components resolves late-stage training degradation in token-level ray encodings.

\paragraph{Comparison with Explicit Geometric Baselines} For comprehensive reference, we compare these attention-based positional encodings against a non-attention baseline using Pl\"ucker coordinates. This baseline embeds local ray geometry directly into the token features without modifying the attention matrix via projection operations.

As reported in \cref{tab:extended_NVS_eval} and \cref{fig:extended_NVS}, DPPE\textsubscript{tAdd} and DPPE\textsubscript{dual} consistently outperform or match the best baseline performance across all datasets, validating the core identifiability analysis. Crucially, the performance gains observed when transitioning from UCPE to UCPE\textsubscript{dual} and UCPE\textsubscript{tAdd} mirror the improvements seen between PRoPE and DPPE. This demonstrates that architectural decoupling is a generalizable solution for stabilizing camera-based positional encodings, irrespective of whether the geometry is parameterized at the image level or the token level.

\subsection{Large Scale Multi-View Depth Estimation}
\label{sec:depth_exp}

To demonstrate that DPPE achieves strong performance regardless of the target task, we apply and compare PRoPE~\cite{prope} and DPPE within VGGT~\cite{vggt}, a multi-view depth estimation model.

\paragraph{Experiment Details}
We conduct experiments using the VGGT~\cite{vggt} architecture. All model parameters are consistent with the original, and the pretrained DINOv2~\cite{DINOv2} encoder remains frozen during training. Training is performed for 240k iterations with a learning rate of $1.0 \times 10^{-4}$ and a warmup-stable-only scheduler (without decay). The minimum learning rate is set to $1.0 \times 10^{-6}$, with a warmup period of 1k iterations and a weight decay of 0.05. The training dataset consists of CO3D~\cite{co3d} (34,998 scenes), MegaSynth~\cite{megasynth} (65,200), WildRGBD~\cite{wildrgbd} (23,280), and ARKitScenes~\cite{arkitscenes} (4,498). For validation, we use the official train/val split for ARKitScenes. For CO3D and MegaSynth, as no official splits were available, we randomly sampled 2,000 scenes from each for validation. WildRGBD was excluded from the validation set due to significant missing pixels in its ground-truth depth, although it remains suitable for training.

\paragraph{Modified Depth Loss}
In the original VGGT, depth is predicted in a space normalized such that the mean depth of all valid pixels equals 1. However, since the model does not have prior information about valid pixel locations, we modified this normalization for consistency. In our setup, we set the reference camera to identity and normalize the mean distance of all camera centers to 1. Depth is then predicted at this scale and evaluated using a direct L1 loss. This approach provides a deterministic scale for both the data and the model. Additionally, we apply the four-stage multi-scale gradient loss used in the original VGGT. Camera loss and point tracking loss are omitted as they are not applicable to our specific experimental setting. Metrics in the plots are smoothed using a decay factor of 0.95 every 1k iterations.

\paragraph{Results and Discussion}
We compare four positional encoding methods: RoPE, which is used in original VGGT, PRoPE, DPPE\textsubscript{tAdd}, and DPPE\textsubscript{dual}. While stable evaluation on ARKitScenes is challenging due to its small proportion in the training data, consistent patterns emerge in CO3D and MegaSynth.

Consistent with the results on the similarly object-centric MVImgNet2~\cite{mvimgnet2} dataset shown in \cref{tab:main_train}, DPPE\textsubscript{dual} demonstrates steady performance improvements on CO3D even in the later stages of training. This confirms the effectiveness of DPPE\textsubscript{dual} not only for NVS tasks but also for multi-view depth estimation. Furthermore, in MegaSynth—a dataset characterized by significant viewpoint changes rather than object-centric scenes—the performance gains of PRoPE saturate during late-stage training, whereas DPPE\textsubscript{dual} maintains consistent improvement. We also observe that DPPE\textsubscript{tAdd} contributes significantly to performance enhancements in the final stages of MegaSynth training. These results indicate that DPPE generalizes well beyond object-centric scenarios, demonstrating robust performance on datasets involving simultaneous changes in rotation and translation.

\begin{figure}[t]
    \centering
    \includegraphics[width=\textwidth]{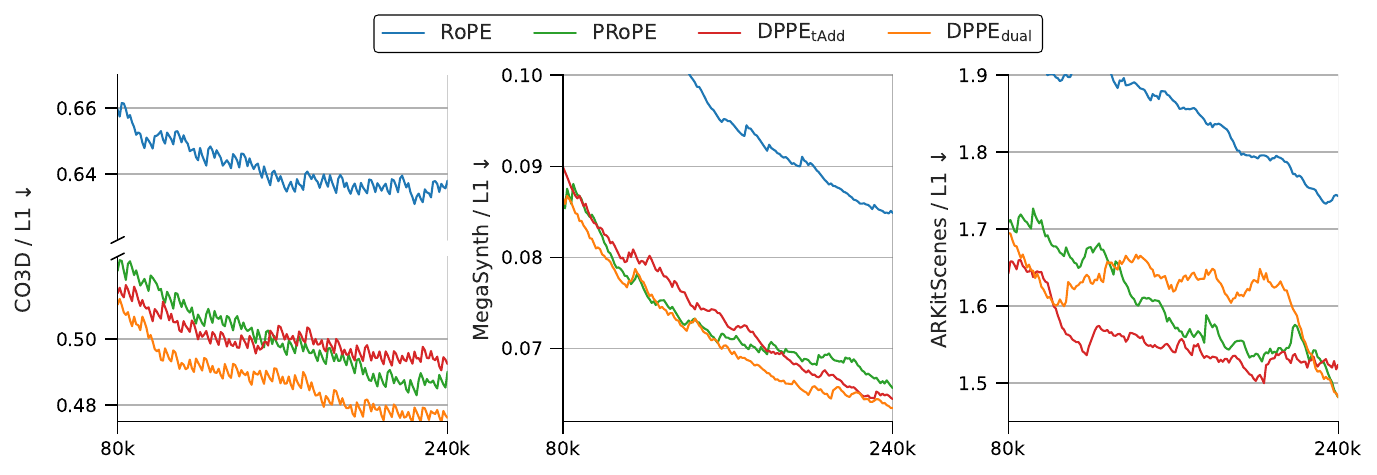}
    \caption{\textbf{Results for multi-view depth estimation with RoPE, PRoPE, and DPPE.} Plot showing the L1 loss values smoothed via an exponential moving average (EMA) with a decay factor of 0.95.}
    \label{fig:depth}
\end{figure}

\subsection{Further Scale-Up Training}
\begin{figure}[t]
    \centering
    \includegraphics[width=\textwidth]{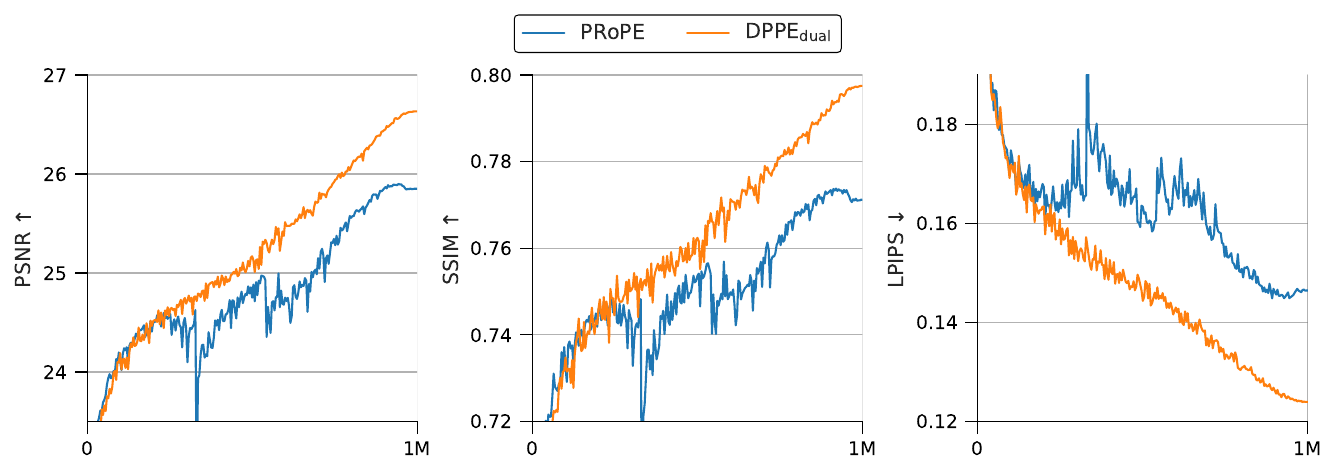}
    \caption{\textbf{Futher scaled-up training with PRoPE and DPPE.} DPPE shows stable training behavior throughout the entire schedule while PRoPE suffers from training instability.}
    \label{fig:large}
\end{figure}

To examine whether the benefit of DPPE persists in a more demanding training regime, we further scale up both the model size and the training horizon.
Specifically, we double the network depth from 12 to 24 layers and extend the training schedule from 320k to 1M iterations.
This setting is substantially more demanding than the configurations used in the preceding experiments, and at this scale training stability becomes a non-trivial concern.
To establish a reliable foundation for such long training runs, we additionally introduce a cosine decay learning-rate scheduler with a short linear warmup, which we find to be essential for stable convergence over the extended schedule.

We evaluate this scaled-up setting on MVImgNet2 for novel view synthesis (NVS), and compare DPPE against PRoPE~\cite{prope} under identical conditions. Following our base protocol, the number of input views is fixed to two.
The peak learning rate is set to 5e-4 with a linear warmup of 500 steps followed by cosine decay over the remaining iterations.
All other hyperparameters are kept identical to those used in \cref{sec:exp}.

As reported in \cref{fig:large}, DPPE exhibits stable training behavior throughout the entire schedule, with all evaluation metrics improving consistently up to the final iteration.
In contrast, PRoPE suffers from training instability and its performance degrades in the late stage of training.
This trend is consistent with the conclusion drawn in the main paper.

\begin{figure}[!htb]
    \centering
    \includegraphics[width=0.95\textwidth]{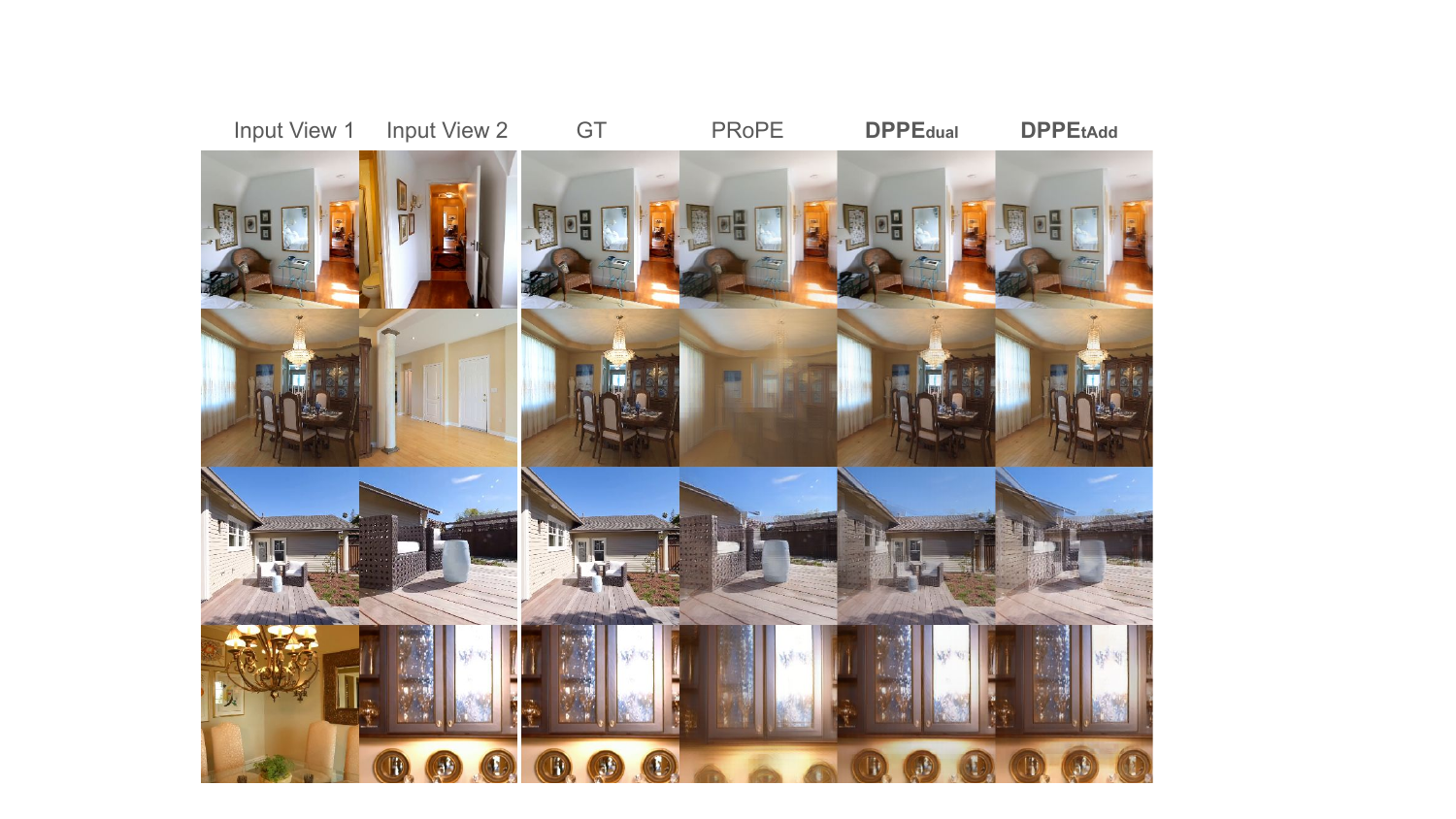}
    \caption{\textbf{Qualitative comparison of DPPE and PRoPE for novel view synthesis on RealEstate10K.}}
    \label{fig:qua-real}
\end{figure}

\begin{figure}[!htb]
    \centering
    \includegraphics[width=0.95\textwidth]{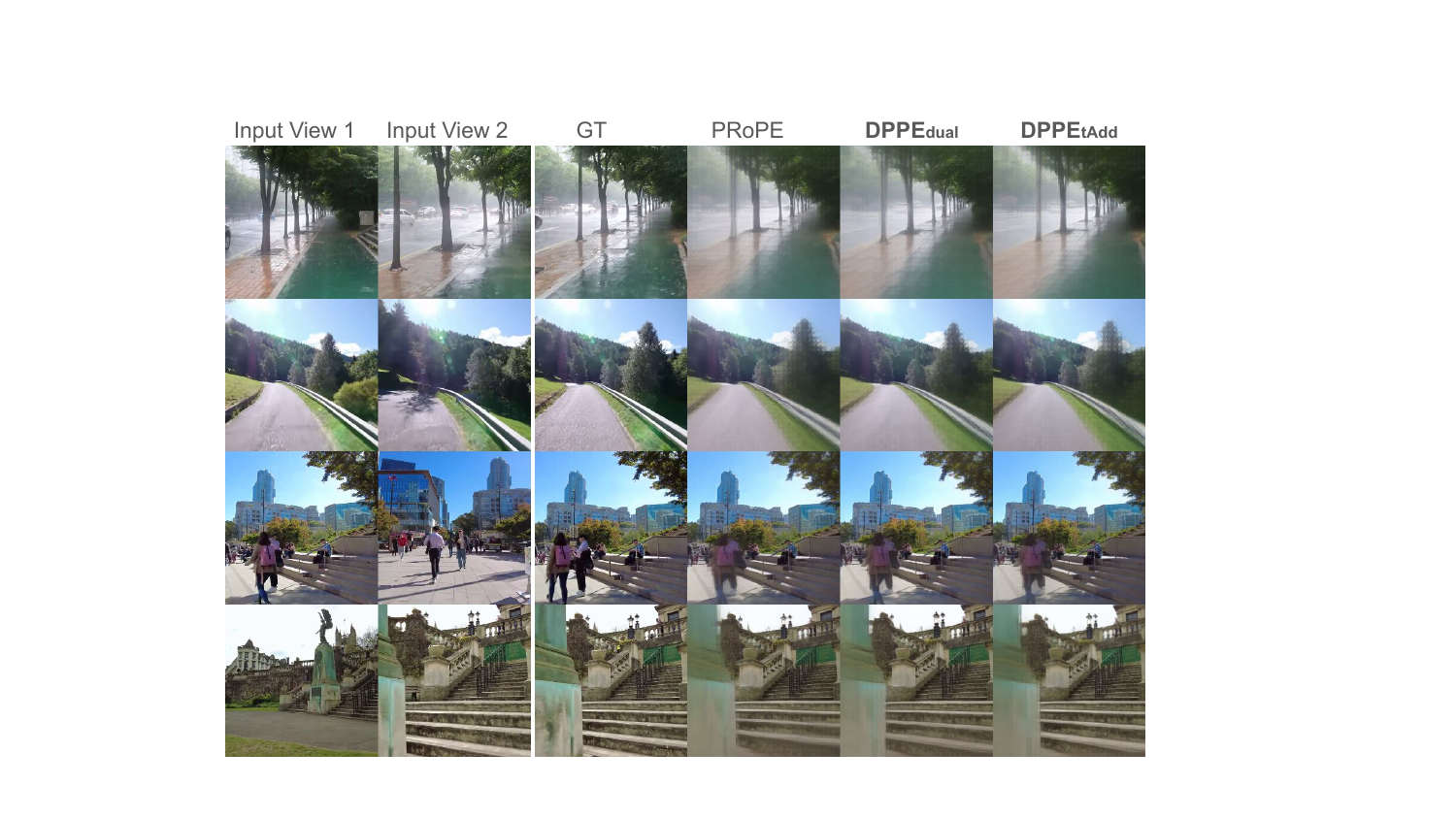}
    \caption{\textbf{Qualitative comparison of DPPE and PRoPE for novel view synthesis on SpatialVidHQ.}}
    \label{fig:qua-svid}
\end{figure}

\subsection{Qualitative Comparison on RealEstate10K and SpatialVidHQ}
\cref{fig:qua-real,fig:qua-svid} show the qualitative comparison of DPPE and PRoPE on RealEstate10K and SpatialVidHQ, respectively.